\newtheorem{proposition}{Proposition}
\newcommand{\doublelinecell}[2][c]{\begin{tabular}[#1]{@{}c@{}}#2\end{tabular}}
\def\appcref#1{\cref{#1}}
\def\appCref#1{\Cref{#1}}
\noindent\textbf{Proofs.}\ }%   opening
\title{Alias-Free ViT: \\Fractional Shift Invariance via Linear Attention}
\author{%
  Hagay Michaeli\\
  Technion \\
  Haifa, Israel \\
  \texttt{hagaymi@campus.technion.ac.il}
    \And
  Daniel Soudry \\
  Technion \\
  Haifa, Israel \\
  \texttt{daniel.soudry@gmail.com}
}
\begin{document}

\maketitle

\begin{abstract}
Transformers have emerged as a competitive alternative to convnets in vision tasks, yet they lack the architectural inductive bias of convnets, which may hinder their potential performance.  
Specifically, Vision Transformers (ViTs) are not translation‑invariant and are more sensitive to minor image translations than standard convnets.
Previous studies have shown, however, that convnets are also not perfectly shift‑invariant, due to aliasing in downsampling and nonlinear layers. 
Consequently, anti‑aliasing approaches have been proposed to certify convnets translation robustness. 
Building on this line of work, we propose an Alias‑Free ViT, which combines two main components. 
First, it uses alias-free downsampling and nonlinearities. 
Second, it uses linear cross‑covariance attention that is shift‑equivariant to both integer and fractional translations, enabling a shift-invariant global representation.
Our model maintains competitive performance in image classification and outperforms similar‑sized models in terms of robustness to adversarial translations.\footnote{Our code is available at \href{https://github.com/hmichaeli/alias_free_vit}{github.com/hmichaeli/alias\_free\_vit}.}

\end{abstract}

\section{Introduction}
Transformers, primarily designed for language modeling \citep{vaswani_attention_2017}, have become dominant in vision tasks \citep{han_survey_2020, khan_transformers_2021}. 
Since they were originally designed for sequential data, their underlying attention mechanism is not sensitive to the locality of information in visual data.
As a result, Vision Transformers (ViTs) exhibit a lack of shift-invariance, a shortcoming that becomes evident in cases where small image translations lead to significant deviations in output \citep{gunasekar_generalization_2022, shifman_lost_2024}.

To mitigate this gap, many studies have been conducted on the integration of convolutional priors, such as spatial hierarchy and shift-invariance, into ViT architectures.
For example, approaches include hierarchical patch merging \citep{liu_swin_2021}, the incorporation of convolutional layers \citep{wu_cvt_2021}, and the design of relative positional encodings \citep{wu_rethinking_2021}. 
Furthermore, some works have drawn parallels between self-attention and dynamic convolutions \citep{han_connection_2021, cordonnier_relationship_2019, andreoli_convolution_2020}, motivating reinterpretations of the attention mechanism through a convolutional lens.

Despite being more spatially aware than transformers, convnets are not perfectly shift-invariant due to aliasing introduced by strided convolutions and pooling layers \citep{azulay_why_2019, zhang_making_2019}. 
This has led to a line of research that aims to restore shift-invariance, by methods including anti-aliasing filters \citep{zhang_making_2019, karras_alias-free_2021, grabinski_frequencylowcut_2022, grabinski_fix_2023, zou_delving_2020, michaeli_alias-free_2023} and adaptive sampling techniques \citep{chaman_truly_2020, rojas-gomez_learnable_2022}. 
Building on these advances, recent works have adapted such methods for transformer architectures. 
For example, Adaptive Polyphase Sampling (APS) has been employed to achieve cyclic shift-equivariance in ViTs \citep{rojas-gomez_making_2024, ding_reviving_2023}.
However, despite the latter approach efficiently guaranteeing shift-invariance for integer pixel cyclic shifts, it falls short in fractional (i.e.,~sub-pixel) shifts and ``standard''  shifts (i.e.~imitating camera translation)  \citep{shifman_lost_2024, michaeli_alias-free_2023, demirel_shifting_2025}.

As aliasing is primarily related to downsampling layers, which are not frequently used in ViTs, only few studies have been conducted on integrating aliasing-reduction techniques to achieve shift-invariance in ViTs. 
\citet{qian_blending_2021} propose plugging a low-pass filter post self-attention to reduce aliasing, however, this only provides a partial solution, as it does not resolve the inherent lack of shift-equivariance in self-attention and aliasing in other nonlinearities. 
Recent works study aliasing in latent diffusion models
\citep{zhou_alias-free_2025, anjum_advancing_2024, yu_dmfft_2025} that typically include attention layers, in order to improve their consistency i.e.~in video generation.
However, these works do not address the main issue in the self-attention operation.
A similar problem may also hinder transformer neural operators which have recently become popular \citep{li_transformer-based_2024, hao_gnot_2023, shih_transformers_2024}, as aliasing has been shown to be related to discretization errors
\citep{zheng_alias-free_2024, bartolucci_representation_2023, raonic_convolutional_2023}.

Another emerging direction focuses on linear and softmax-free attention mechanisms, initially proposed to reduce the quadratic complexity of standard attention in large language models (LLMs) \citep{katharopoulos_transformers_2020, choromanski_rethinking_2022, wang_linformer_2020}. 
In the vision domain, models such as XCiT \citep{el-nouby_xcit_2021} and SimA \citep{koohpayegani_sima_2024} demonstrate that alternative attention formulations, e.g., using cross-covariance or linear attention, can maintain competitive performance without directly computing a full attention map. 
Beyond improved efficiency, we now show that such mechanisms enable designing a transformer-based architecture that is shift-equivariant, similar to convnets.

\paragraph{Contributions.} 
In this paper
\begin{itemize}[leftmargin=12pt]
    \item We present in \Cref{sec:aft_methods} a certain class of shift-equivariant attention layers (SEA), including linear attention and cross-covariance attention, which is useful for vision tasks.
    \item We design in \Cref{sec:implementation} a shift-invariant, alias-free Vision Transformer (AFT) using cross-covariance attention and alias-free nonlinearities and show it has a competitive performance in image classification.
    
    \item We show in \Cref{sec:experiments} that the AFT is robust ($\sim99\%$ consistency) to fractional cyclic shifts, and significantly more robust to practical translations than other similar models, albeit with increased computational overhead due to the alias-free components.
\end{itemize}

\begin{figure}[t] % or \begin{figure*}[t] for full width
  \centering
  % --- first sub‑figure ---
    % \includegraphics[width=\linewidth]{figures/xcit_af_model_2.drawio.pdf}
    \includegraphics[width=\linewidth]{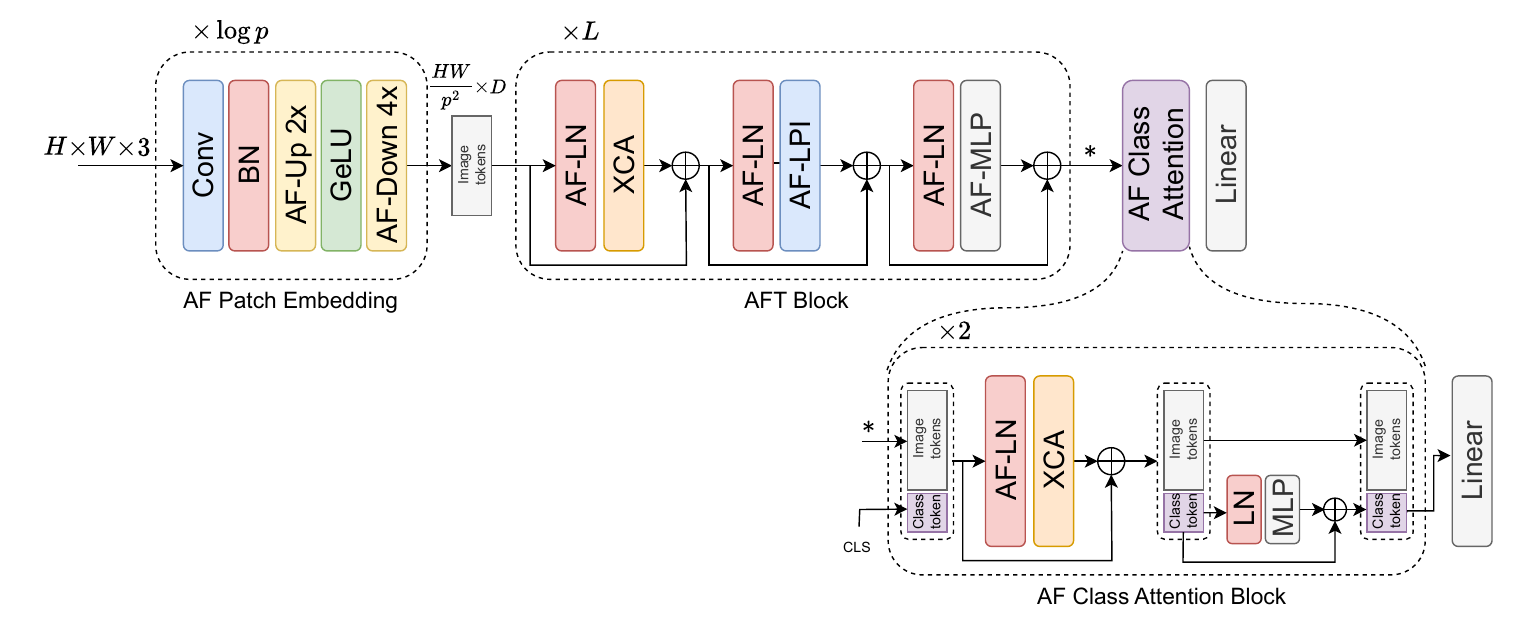}
    
    \caption{
    \textbf{Overview of the Alias-Free Vision Transformer (AFT) architecture.}
    The input image is first processed by an alias-free patch embedding module composed of convolutional layers (Conv), batch normalization (BN), and alias-free activation, composed of upsampling (AF-Up), GELU and downsampling (AF-Down). 
    The result is reshaped to a token matrix form and fed through $L$ Alias-Free Transformer blocks, each consisting of alias-free layer normalization (AF-LN), cross-covariance attention (XCA), alias-free local patch interaction (AF-LPI), and alias-free MLP (AF-MLP) layers, interconnected by residual connections. 
    % The final representation is obtained via global average pooling and classified by a linear layer. 
    The result is concatenated with a learnable class token embedding and fed into two Alias-Free Class Attention blocks composed of an XCA layer and an MLP applied on the class token. 
    The final representation is the updated class token, which is fed into a final linear classifier.
    Detailed explanations of each component are provided in \Cref{sec:implementation}.}
% Overview of the proposed \emph{Alias‑Free ViT} (AFT).
% A convolutional patch‑embedding with alias‑free up/down‑sampling converts an input image of size $H \times W \times 3$ into $HW/p^{2}$ $D$‑dimensional tokens while suppressing aliasing (left).
% The token sequence is processed by $L$ identical \textbf{AFT blocks}, each composed of global AF‑LN, cross‑covariance linear attention (XCA), alias‑free local patch interaction (AF‑LPI), and alias‑free MLP, all coupled by residual connections (middle).
% Global average pooling yields a shift‑invariant representation that is classified by a linear head (right).
% Every stage is designed to preserve fractional shift‑equivariance, ensuring robustness to both integer and sub‑pixel translations.\label{fig:mode-implementation}}
    % \caption{Alias‑Free ViT (AFT): an alias‑free patch‑embedding maps the $H\times W \times 3$ image to $HW/p^{2}$ tokens, which pass through $L$ AFT blocks (AF‑LN, XCA, AF‑LPI, AF‑MLP); global average pooling and an MLP head produce the final prediction, with every stage preserving fractional shift‑equivariance.\label{fig:mode-implementation}}
    \label{fig:mode-implementation}
  \hfill
\end{figure}

\section{Methods} \label{sec:aft_methods}

\subsection{Preliminaries: the Vision Transformer} \label{sec:preliminary-vit}
The Vision Transformer (ViT) \citep{dosovitskiy_image_2020} transfers the Transformer architecture \citep{vaswani_attention_2017} from text to images by interpreting an image as a sequence of visual tokens. 
Given an input $x\!\in\!\mathbb{R}^{C\times H\times W}$, the image is partitioned into $N=\frac{HW}{p^{2}}$ non‑overlapping patches of resolution $p\times p$. 
Each patch is flattened and projected by a learned linear layer into a $D$‑dimensional embedding, forming $X\in\mathbb{R}^{N\times D}$. 
For classification, a learnable ``class'' token is prepended to $X$ and later serves as a global representation, propagated into the classification module. 
Learnable \emph{absolute} positional encodings $\mathcal P\in\mathbb{R}^{\left(N+1\right)\times D}$ are then added to compensate for the permutation‑invariance of self‑attention.

The resulting sequence $\tilde X = X + \mathcal P$ is processed by $L$ identical Transformer encoder blocks. 
Each block is composed of a Multi‑Head Self‑Attention (MHSA) module with a two‑layer Feed‑Forward Network (FFN), interleaved with Layer Normalization (LN) and residual connections:
\begin{align}
\hat X^{\left(\ell\right)} &= \tilde X^{\left(\ell-1\right)} + 
    \mathrm{MHSA}\!\left(\mathrm{LN}\left(\tilde X^{\left(\ell-1\right)}\right)\right),\\
\tilde X^{\left(\ell\right)} &= \hat X^{\left(\ell\right)} + 
    \mathrm{FFN}\!\left(\mathrm{LN}\left(\hat X^{\left(\ell\right)}\right)\right),
\end{align}
where $\ell\!=\!1,\dots,L$ and $\tilde X^{\left(0\right)}=\tilde X$.

\paragraph{Multi‑Head Self‑Attention.}
Each of the \(h\) heads linearly projects the input into queries, keys and values,
\(Q=XW_{q},\;K=XW_{k},\;V=XW_{v}\), with \(W_{q},W_{k},W_{v}\in\mathbb{R}^{D\times d_{h}}\) and \(d_{h}=D/h\). 
Self-Attention is then computed as
\begin{equation}\label{eq:sa}
\mathrm{SA}\left(X\right) = \mathrm{softmax}\!\left(QK^{\top}/\sqrt{d_{h}}\right)V \,.
\end{equation}
The outputs of all heads are concatenated and projected back to \(\mathbb{R}^{D}\) by a final linear layer.

% \paragraph{Multi‑Head Self‑Attention.}
% For a single head, queries, keys and values are
% $Q=XW_{q},\;K=XW_{k},\;V=XW_{v}$ with $W_{q},W_{k},W_{v}\in\mathbb{R}^{D\times d_{h}}$ and $d_{h}=D/h$. 
% Attention is computed as
% \begin{equation}\label{eq:sa}
% \mathrm{SA}\left(X\right) = \mathrm{softmax}\!\left(QK^{\top}/\sqrt{d_{h}}\right)V,
% \end{equation}
% and $h$ parallel heads are concatenated and linearly projected.

\paragraph{Feed‑Forward Network.}
The FFN first expands the embedding dimension to $4D$, applies a GELU activation~\citep{hendrycks_gaussian_2023}, and projects back:
\begin{equation}
\mathrm{FFN}\left(X\right)=W_{2}\,\mathrm{GELU}\left(W_{1}X+b_{1}\right)+b_{2},
\end{equation}
with $W_{1}\in\mathbb{R}^{4D\times D}$ and $W_{2}\in\mathbb{R}^{D\times4D}$.

\subsection{Linear Attention}\label{sec:linear-attn}
Notably, \Cref{eq:sa} requires computing $QK^\top \in \mathbb{R}^{N\times N}$ explicitly, which has a quadratic cost in the number of tokens. 
This has motivated many linear complexity variants of self-attention \citep{katharopoulos_transformers_2020,choromanski_rethinking_2022,wang_linformer_2020}.  
In vision, SimA removes softmax entirely by maintaining stability using $\ell_{1}$–normalized $Q$ and $K$ \citep{koohpayegani_sima_2024}.  
XCiT mixes channels instead of tokens with cross‑covariance attention (XCA) \citep{el-nouby_xcit_2021},
\begin{equation}\label{eq:xca}
\mathrm{XCA}(X)=V\,\mathrm{softmax} \left(\hat K^{\top}\ \hat Q/\tau \right),
\end{equation}
where $\hat Q,\hat K$ are $\ell_{2}$‑normalized along the token dimension and
$\tau>0$ is a learnable temperature.  
% The softmax is applied row‑wise to the $d_{h}\times d_{h}$ cross‑covariance, giving
% $\mathcal O(Nd_{h}^{2})$ complexity while preserving the shift‑equivariance
% properties established in \Cref{prop:kv-inv}.%

\subsection{Alias-Free Vision Transformer} \label{sec:method-afc}
Next, we describe our proposed model, replacing every ViT component that is not shift‑equivariant with a modification that restores equivariance.
For brevity, the analysis is given for a one-dimensional signal. 
The same principles can be applied in the two-dimensional case by viewing the sequence of $N$ tokens in a two-dimensional representation, namely $X\in\mathbb{R}^{\frac{H}{p} \times \frac{W}{p} \times D}.$\footnote{In practice two-dimensional signals tokens are also stacked into a one-dimensional sequence (formally the row-stack). Nevertheless, shift-equivariance in the two dimensions is still maintained under this representation.}

Similar to \citet{karras_alias-free_2021} and \citet{michaeli_alias-free_2023}, we view the input as a discrete sampling of an underlying band‑limited continuous signal.  
The main difference from a standard convnet emerges after tokenization: the patch‑embedding matrix
\(X\!\in\!\mathbb{R}^{N\times D}\) stores the sequence length \(N\) along its first axis, and the embedding dimension \(D\) in the second axis.  
Thus, the spatial and channel roles are swapped compared to convnet feature maps, where the channel index comes first and spatial indices follow.  
Throughout our analysis, we therefore interpret the \(D\) columns of \(X\) as \(D\) ``channels'' of a length‑\(N\) one‑dimensional signal.  
Maintaining shift‑equivariance then amounts to ensuring that each column transforms equivariantly under fractional translations in the continuous domain.

\paragraph{Shift invariance and equivariance.}
We reuse the definitions of \citet{michaeli_alias-free_2023}.  
Let $x[n]$ be a discrete signal, $T$ its sample spacing, and $z(t)$ the unique $\tfrac{1}{2T}$‑band‑limited signal with $x[n]=z(nT)$.  
For any (possibly non‑integer) shift $\Delta\!\in\! \mathbb{R}$,
\[
\tau_\Delta x[n]\;=\;z(nT+\Delta).
\]
An operator $f$ is \emph{shift‑equivariant} if $f(\tau_\Delta x)=\tau_\Delta f(x)$ and \emph{shift‑invariant} if $f(\tau_\Delta x)=f(x)$ for all $x$ and $\Delta$.

In some cases, we may claim that a value is shift-equivariant.
This is a slight abuse of the definition to say the overall operator computing this value is shift-equivariant w.r.t.~the input signal.
% \todo{see if this is needed}

% ------------------------------------------------------------------
\paragraph{Patch embedding.}
Algebraically, the tokenization process described in \Cref{sec:preliminary-vit} is equivalent to a convolution with kernel size \(p\), stride \(p\), and \(D\) output
channels.  
By separating this stride‑\(p\) convolution into a stride‑1 convolution followed by an alias-free downsampling \citep{grabinski_frequencylowcut_2022}, the composite operator becomes shift‑equivariant \citep{zhang_making_2019, karras_alias-free_2021, michaeli_alias-free_2023}.
However, this requires inserting a single low-pass filter with cut‑off \(1/p\), which would severely attenuate high‑frequency content, especially for large patches.  
Instead, we employ a convolutional patch‑embedding that replaces the single stride‑\(p\) layer with a short convnet of progressively smaller strides, often used in hybrid models \citep{el-nouby_xcit_2021,wu_cvt_2021, yuan_tokens--token_2021, graham_levit_2021}.
% Here, we apply a gentler \dnote{what does gentler mean?} low-pass filter before downsampling layers and use alias-free activation functions \citep{michaeli_alias-free_2023}, enabling the network to learn high‑frequency features.
Here, we can avoid aliasing by plugging a low-pass filter with a larger cut-off before each downsampling layer, and by using an alias-free activation function \citep{karras_alias-free_2021, michaeli_alias-free_2023}. 
This gradual approach still enables the network to learn high‑frequency features, despite the anti-aliasing components.

\paragraph{Positional encoding.}
Absolute positional encoding injects the global index of each token and therefore breaks shift‑equivariance.  
Several relative schemes have been proposed that depend only on pairwise offsets and thus preserve shift-equivariance at the token level \citep{shaw_self-attention_2018, liu_swin_2021, chu_conditional_2023}.  
These methods, however, do not guarantee equivariance to pixel-level translations, as they cause the patch contents themselves to change.  
Notably, the convolutional patch embedding already breaks permutation invariance of the tokens, possibly reducing the need for additional positional signals.  
Moreover, recent studies demonstrate that hybrid transformers can learn effectively without any explicit positional encoding \citep{arezki_convolutional_2023,zhang_depth-wise_2025}.  
Consistent with this observation, we find empirically (\Cref{sec:ablation-study}) that positional encoding may be unnecessary in architectures that include convolutional layers inside the transformer blocks, such as XCiT \citep{el-nouby_xcit_2021}.

% ------------------------------------------------------------------
% \paragraph{Self‑Attention.}
\paragraph{Shift-Equivariant Attention.}
% We next show that linear attention, obtained by omitting the softmax in \Cref{eq:sa}, is shift‑equivariant.
% whereas the standard softmax attention is not.
We next show a class of attention operations that, by removing the softmax in \Cref{eq:sa}, are shift‑equivariant.
This primarily includes the linear attention, which, although still less common, is attractive for its lower complexity~\citep{katharopoulos_transformers_2020,sun_retentive_2023} and yields competitive vision results~\citep{koohpayegani_sima_2024}.
% Although still less common, linear attention is attractive for its lower complexity~\citep{katharopoulos_transformers_2020,sun_retentive_2023} and yields competitive vision results~\citep{koohpayegani_sima_2024}.
Formally,
% \begin{equation}\label{eq:linear-attention}
%     \mathrm{LA} \left( X \right)\;=\; \left( XW_k \right) \left( XW_q \right)^{\top}XW_v .
% \end{equation}
% \begin{equation}\label{eq:linear-attention}
%     \mathrm{LA} \left( X \right)\;=\; Q K^{\top} V \,,
% \end{equation}
% where we keep the existing notation
% $Q=XW_q,\;K=XW_k,\;V=XW_v$ and set $V'=\mathrm{LA} \left( X \right)$.
%
\begin{equation}\label{eq:linear-attention}
    \mathrm{SEA} \left( X \right)\;=\; Q f \left( K^{\top} V \right)\,,
\end{equation}
where we keep the existing notation
$Q=XW_q,\;K=XW_k,\;V=XW_v$, and let $f: \mathbb{R}^{D\times D} \rightarrow \mathbb{R}^{D\times D}$ be an arbitrary function.
% and set $V'=\mathrm{LA} \left( X \right)$.

We now establish the desired property in three steps.

\begin{proposition}\label{prop:qkv-equiv}
$Q$, $K$ and $V$ are shift‑equivariant.
\end{proposition}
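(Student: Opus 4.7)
The plan is to exploit the fact that the weight matrices $W_q, W_k, W_v$ act only on the embedding (channel) dimension, while the shift operator $\tau_\Delta$ acts independently within each channel of $X \in \mathbb{R}^{N \times D}$. Since the two operations address orthogonal axes, they commute, which is essentially the same reason $1\times 1$ convolutions are shift-equivariant.

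First, I would unpack the column-centric convention introduced in \Cref{sec:method-afc}: each column $X_{:,i}\in\mathbb{R}^{N}$ is viewed as a length-$N$ 1D signal with an underlying $\tfrac{1}{2T}$-band-limited continuous interpolant, and $\tau_\Delta$ is applied column-wise. Second, I would record that $\tau_\Delta$ is linear on the space of band-limited signals: Shannon interpolation is linear in its samples, and a shift of a sum of band-limited functions equals the sum of their shifts, so for scalars $\alpha,\beta$ and columns $x_1,x_2$,
\begin{equation*}
\tau_\Delta(\alpha x_1 + \beta x_2) \;=\; \alpha\,\tau_\Delta x_1 + \beta\,\tau_\Delta x_2 .
\end{equation*}
Moreover, the class of $\tfrac{1}{2T}$-band-limited signals is closed under linear combination, so the result stays in the domain on which $\tau_\Delta$ is defined.

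Third, I would write $Q=XW_q$ column-wise, $Q_{:,j}=\sum_{i=1}^{D}(W_q)_{ij}X_{:,i}$, and apply $\tau_\Delta$ using the linearity above:
\begin{equation*}
(\tau_\Delta Q)_{:,j} \;=\; \tau_\Delta\!\left(\sum_{i=1}^{D}(W_q)_{ij}X_{:,i}\right) \;=\; \sum_{i=1}^{D}(W_q)_{ij}(\tau_\Delta X)_{:,i} \;=\; ((\tau_\Delta X)W_q)_{:,j}.
\end{equation*}
Hence $\tau_\Delta(XW_q)=(\tau_\Delta X)W_q$, which is exactly shift-equivariance of the map $X\mapsto Q$. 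The same argument, verbatim, applies to $K=XW_k$ and $V=XW_v$.

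There is no real obstacle here; the proposition follows immediately from (i) linearity of the band-limited shift, and (ii) the fact that right multiplication by a constant matrix mixes columns (channels) without touching the token axis. The only subtlety worth stating explicitly is the closure of the band-limited function class under linear combination, which ensures that the shifted columns of $Q$, $K$, and $V$ remain in the domain where $\tau_\Delta$ is meaningful, so that the equivariance identity is well posed.
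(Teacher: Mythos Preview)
Your proof is correct and follows the same idea as the paper: each column of $Q$, $K$, $V$ is a fixed linear combination of the columns of $X$, and the (linear) fractional shift operator commutes with such combinations. You have simply made explicit the steps the paper leaves implicit, namely the linearity of $\tau_\Delta$ on band-limited signals and the closure of that class under linear combination.
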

\begin{proof}
Each column of $Q$, $K$ or $V$ is a fixed linear combination of the columns of $X$.  
As the columns of $X$ are merely the channels of the same signals, they translate together, and any linear combination of them is also shift-equivariant.

\end{proof}

\begin{proposition}\label{prop:kv-inv}
% $K^{\top}V$ is shift‑invariant.
$ f \left( K^{\top}V \right)$ is shift‑invariant.
\end{proposition}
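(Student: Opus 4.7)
My plan is to observe that each entry $(K^{\top}V)_{ij}$ is the inner product of the $i$-th column of $K$ with the $j$-th column of $V$, both viewed as signals of length $N$. By \Cref{prop:qkv-equiv}, translating the input by $\Delta$ translates every column of $K$ and $V$ by the same $\Delta$ in the band-limited sense. So to obtain shift-\emph{invariance} of $K^{\top}V$, the only thing I need to argue is that a common translation of two signals leaves their inner product unchanged, after which invariance propagates through the arbitrary function $f$ trivially.

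First, I would unwrap the entry formula
\begin{equation*}
    (K^{\top}V)_{ij} \;=\; \sum_{n=0}^{N-1} K_{ni}\, V_{nj} \;=\; \langle k_i,\,v_j\rangle,
\end{equation*}
where $k_i,v_j\in\mathbb{R}^{N}$ are the relevant columns, and denote by $\tilde k_i, \tilde v_j$ their band-limited continuous interpolants. After a shift $\Delta$, Proposition \ref{prop:qkv-equiv} gives $k_i \mapsto \tau_{\Delta}k_i$ and $v_j \mapsto \tau_{\Delta}v_j$, with the same $\Delta$ for every column since all of them come from shifting the single underlying input signal.

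Second, I would invoke a Parseval/DFT argument: under the cyclic band-limited shift model used throughout the paper (following \citet{michaeli_alias-free_2023}), the discrete inner product equals, up to a fixed normalization, the integral $\int \tilde k_i(t)\tilde v_j(t)\,dt$, or equivalently the frequency-domain pairing $\int \hat{\tilde k}_i(\omega)\overline{\hat{\tilde v}_j(\omega)}\,d\omega$. Translating both signals by $\Delta$ multiplies the two spectra by $e^{-i\omega\Delta}$ and $e^{+i\omega\Delta}$ respectively, so the phase factors cancel inside the integral. Hence $\langle \tau_{\Delta}k_i,\tau_{\Delta}v_j\rangle = \langle k_i,v_j\rangle$ for every pair $(i,j)$, meaning $K^{\top}V$ is unchanged. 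Applying $f$ preserves this invariance.

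The main obstacle, and the place where I expect to have to be a little careful, is the boundary issue: the clean phase cancellation works exactly in the cyclic/DFT setting, whereas for non-cyclic shifts the discrete inner product only equals the continuous one up to boundary contributions. Since the paper already implicitly adopts the band-limited periodic shift model when defining $\tau_\Delta$, I expect the authors to either appeal to that convention or to state the result up to these boundary effects, but this is the step that needs the most attention to make rigorous.
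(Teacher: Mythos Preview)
Your proposal is correct and follows essentially the same route as the paper: reduce to a single entry $(K^{\top}V)_{ij}=K_i^{\top}V_j$, apply Parseval to move to the frequency domain, and observe that the common phase factor $e^{j\omega\tau}$ from \Cref{prop:qkv-equiv} cancels against its conjugate, after which invariance of $f(K^{\top}V)$ is immediate. Your explicit remark about the cyclic/band-limited boundary assumption is a point the paper leaves implicit, so you are if anything slightly more careful than the original on that front.
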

% \begin{proof}
\emph{Proof sketch.}
Consider the matrix entry $\left( K^{\top} V \right)_{ij} = K_i^{\top}V_j$, where $K_i$ and $V_j$ denote columns. 
By Parseval’s theorem,
\[
K_i^{\top}V_j=\frac{1}{2\pi}\int_{-\pi}^{\pi}\hat K_i(\omega)\,
\hat V_j^{*}(\omega)\,d\omega .
\]
A fractional translation by $\tau$ multiplies both Fourier transforms by the same phase factor $e^{j\omega\tau}$, which cancels in the product.
Hence, every entry of $K^{\top}V$ remains unchanged due to a translation (see formal proof in \appcref{sec:appendix-proofs}).
An important observation is that since $K^\top V$ is shift-\emph{invariant}, any matrix operation can be applied on it 
without compromising this property.

% e overall shift-\emph{equivariance}, including applying softmax on its row, similar to the cross-covariance attention in \Cref{eq:xca}. 
% In this layer, the columns of $Q$ and $V$ are normalized by their $L_2$ norm, which as well maintains shift-equivariance \citep{michaeli_alias-free_2023}.

% \end{proof}

\begin{proposition}\label{prop:vprime-equiv}
$V'=Q f \left( K^{\top}V \right)$ is shift‑equivariant.
% $V'=QK^{\top}V$ is shift‑equivariant.
\end{proposition}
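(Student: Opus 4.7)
The plan is to combine the two preceding propositions directly: \Cref{prop:qkv-equiv} supplies the shift-equivariance of $Q$, and \Cref{prop:kv-inv} tells us that $M := f(K^{\top}V)\in\mathbb{R}^{D\times D}$ is shift-invariant, so the product $V' = QM$ is the product of an equivariant factor with a constant (shift-independent) matrix. The only thing to argue is that right-multiplication by a fixed $D\times D$ matrix preserves shift-equivariance of the columns interpreted as $D$ channels of a length-$N$ signal.

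Concretely, I would first fix an arbitrary shift $\Delta\in\mathbb{R}$ and write $\tau_\Delta X$ for the column-wise fractional shift of $X$. By \Cref{prop:qkv-equiv}, $Q(\tau_\Delta X)=\tau_\Delta Q(X)$, and by \Cref{prop:kv-inv}, $f(K(\tau_\Delta X)^{\top}V(\tau_\Delta X))=f(K(X)^{\top}V(X))=M$. Thus
\begin{equation}
V'(\tau_\Delta X) \;=\; Q(\tau_\Delta X)\,M \;=\; (\tau_\Delta Q(X))\,M .
\end{equation}

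The remaining step is to observe that $(\tau_\Delta Q)\,M = \tau_\Delta(QM)$. This is where the ``fractional shift'' definition has to be handled carefully, but it is essentially automatic: each column of $QM$ is a fixed linear combination (with coefficients from the columns of $M$) of the columns of $Q$, and fractional translation is defined via the band-limited interpolant $z(t)$, which is linear in the samples. Hence a linear combination of the band-limited interpolants of $Q$'s columns is the band-limited interpolant of the same linear combination of the columns themselves, and sampling it at $nT+\Delta$ commutes with taking that linear combination. Combining, $V'(\tau_\Delta X)=\tau_\Delta V'(X)$, which is the claim.

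The only mild obstacle is being explicit that $\tau_\Delta$ commutes with right-multiplication by a constant matrix; once the shift is phrased via the linear reconstruction operator $x[n]\mapsto z(t)$ this is just linearity, but it is worth stating so that the proof does not appear to silently invoke a property of $\tau_\Delta$ that was only verified for exact linear combinations of the input columns in \Cref{prop:qkv-equiv}. No new analytic machinery beyond what was used for \Cref{prop:qkv-equiv,prop:kv-inv} is required.
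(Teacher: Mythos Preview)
Your proposal is correct and follows essentially the same approach as the paper: both argue that each column of $V'$ is a fixed linear combination of the shift-equivariant columns of $Q$ with shift-invariant coefficients $f(K^{\top}V)_{ij}$, hence shift-equivariant. Your version is slightly more explicit about why $\tau_\Delta$ commutes with right-multiplication by a constant matrix (via linearity of the band-limited interpolant), which the paper leaves implicit.
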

\begin{proof}
Column $j$ of $V'$ satisfies
\[
V'_{\,j}\;=\;\sum_{i=1}^{D}Q_i\,
           f \left(K^{\top}V\right)_{ij},
\]
i.e.\ a sum of \emph{shift‑equivariant} columns $Q_i$ (\Cref{prop:qkv-equiv}) scaled by \emph{shift‑invariant} coefficients $f \left(K^{\top}V\right)_{ij}$ (\Cref{prop:kv-inv}).
The resulting column is therefore shift‑equivariant.
\end{proof}

As mentioned above, since $K^\top V$ is shift-\emph{invariant}, any matrix operation $f$ can be applied on it without compromising the overall shift-\emph{equivariance}.
By the same argument, $K^\top Q$ is also shift-invariant; thus a row-wise softmax on $K^\top Q$ as used in XCA (\Cref{eq:xca}) is permissible.
Note that in XCA, the columns of $Q$ and $K$ are \(\ell_{2}\)-normalized along the token dimension, which as well maintains shift-equivariance \citep{michaeli_alias-free_2023}.

% As mentioned above, since $K^\top V$ is shift-\emph{invariant}, any matrix operation $f$ can be applied on it without compromising the overall shift-\emph{equivariance}, including applying softmax on its row, similar to the cross-covariance attention in \Cref{eq:xca}. 
% In this layer, the columns of $Q$ and $V$ are normalized by their  \(\ell_{2}\)-norm, which as well maintains shift-equivariance \citep{michaeli_alias-free_2023}.

\paragraph{MLP.} The MLP linear layers apply the same linear transformation on all tokens, maintaining shift-equivariance as in \Cref{prop:qkv-equiv}.
However, pointwise nonlinearities induce aliasing that breaks fractional shift-equivariance.
This can be solved similarly to \citep{michaeli_alias-free_2023} by an alias-free activation function, which includes upsampling before the nonlinearity and downsampling back after low-pass filtering.

\paragraph{Layer normalization.}
Per‑token LayerNorm rescales each column differently and breaks equivariance.  
% We use a shift-equivariant global variant proposed by 
The same problem has been addressed by 
\citet{michaeli_alias-free_2023}, by using a global variant, namely
\begin{equation}
\hat X_{ij} = \frac{X_{ij}-\mu}{\sqrt{\sigma^{2}+\epsilon}},\quad
\mu_i=\tfrac1{D}\sum_{j}X_{ij},\;
\sigma^{2}=\tfrac1{ND}\sum_{i,j}(X_{ij}-\mu_{i})^{2}.    
\end{equation}
where $\mu$ is computed per token and $\sigma^2$ per layer.

\paragraph{Class token.}
Prepending the ``class'' token interferes with the signal representation of the columns of the embedding matrix, and breaks shift-equivariance.
A simple solution is to instead construct a global representation using global average pooling over the embedding dimension after the last transformer block, i.e.
\begin{equation}
\bar{X} = \frac{1}{N}\sum_{i=1}^{N} X^{\left(L\right)}_{i} \in \mathbb{R}^{D} \,,    
\end{equation}

similar to the common approach in convnets and a few other ViTs \citep{liu_swin_2021, graham_levit_2021, yu_metaformer_2022}. 
This, assuming shift-equivariance is maintained, yields a shift-invariant global representation \citep{michaeli_alias-free_2023}.

\paragraph{Class Attention.}
Some ViTs append the class token only after $L$ patch‑only blocks and update it through a few class‑attention (CA) layers \citep{touvron_going_2021,el-nouby_xcit_2021,cotogni_exemplar-free_2023}.
These layers usually fix the patch embeddings and update only the class token embedding by attending it to itself and to the frozen patch embeddings.
We find that when using SEA after concatenating the class token, \Cref{prop:qkv-equiv,prop:kv-inv,prop:vprime-equiv} still hold w.r.t.~the patch tokens, and additionally the class token remains shift-invariant (see proof in \appcref{sec:appendix-proofs}).
Retaining similarity to the original class attention blocks, we only propagate the class embedding through the MLP, which also prevents aliasing in the nonlinearity. 
We use this approach instead of global average pooling since we find it performs slightly better.
 
\section{Implementation}
\label{sec:implementation}
We implement the Alias-Free Vision Transformer (AFT) based on XCiT architecture \citep{el-nouby_xcit_2021}.
XCiT replaces standard self‑attention with cross‑covariance attention, described in \Cref{eq:xca}. This,  by \Cref{prop:vprime-equiv}, preserves shift‑equivariance.  
The remaining modifications focus on eliminating aliasing in the patch‑embedding, local patch interaction (LPI), and MLP blocks. The overall model is presented in  \Cref{fig:mode-implementation}.

\paragraph{Conv Patch Embedding.} 
XCiT patch embedding (PE) module is a sequence of blocks composed of strided convolution, batch normalization, and GELU.
Similar to \citep{michaeli_alias-free_2023}, we replace the strided convolution with a stride of 1  and insert an alias-free downsampling at the end of the block, implemented by truncation of high-frequencies in the Fourier domain, using Fast Fourier Transform (FFT) \citep{rippel_spectral_2015,grabinski_frequencylowcut_2022}.
We replace zero-padding with circular padding.
We use alias-free activation functions by wrapping the GELU activation with upsampling and alias-free downsampling layers \citep{karras_alias-free_2021, michaeli_alias-free_2023}. 
In contrast to \citep{michaeli_alias-free_2023}, we find that replacing the GELU with a polynomial activation to get perfect shift-invariance degrades the model performance significantly. 
Conversely, we find that keeping GELU  affects the translation-robustness marginally.  
% We do not add any positional encoding and do not prepend a class token.
 We do not add positional encodings and do not prepend a class token at this stage; the class token is appended only before the class attention blocks.

\paragraph{AFT Block.}
Following the patch embedding, XCiT is composed of a sequence of Blocks, each consisting of three components: cross-covariance attention (XCA), local patch interaction (LPI), and MLP.
As argued in \Cref{sec:method-afc}, the XCA is already shift-equivariant. 
The LPI consists of two convolutional layers, batch normalization, and activation layers, and we treat each of them as in the PE, forming an alias-free version we denote AF-LPI.
The MLP applies a shared two-layer FFN on each token, and can be viewed as a convolutional layer with kernel size 1. Therefore, the only required modification to form the alias-free variant (AF-MLP) is converting the activation function into an alias-free activation.
We replace all LayerNorm instances, applied before XCA, LPI, and MLP, with the alias-free layer norm described in \Cref{sec:method-afc}.

% \paragraph{Classification.}
% We get the global representation by average pooling on the sequence dimension of the last block output, getting a vector in $\mathbb{R}^D$. 
% This propagated into the final linear classification head.

\paragraph{Classification.}\label{sec:impl-classifier}
After the last AFT block we append a learnable class token and apply two \emph{AF class attention} blocks. 
Each block consists of AF-LN, an XCA layer, and an MLP applied only to the class token.
Residual connections are used around XCA and the MLP, mirroring the AFT blocks. 
By the SEA properties, the patch tokens remain shift-equivariant and the class token is shift-invariant (see \Cref{prop:class-attn}). 
The final prediction is obtained by a linear classifier applied to the class token.

\section{Experiments} \label{sec:experiments}
We evaluate our Alias-Free Transformer (AFT) on the ImageNet dataset \citep{deng_imagenet_2009} and compare its accuracy and shift consistency with the baseline XCiT model.
We additionally compare our method with the adaptive polyphase sampling (APS) approach \citep{rojas-gomez_making_2024, ding_reviving_2023}, which we implement by replacing the strided-convolutions in the PE with stride-1 convolutions followed by APSPool \citep{chaman_truly_2020}, and using the
standard class attention blocks for classification (maintaining integer shift-invariance).
% and replacing the class token with global average pooling \citep{rojas-gomez_making_2024}.
% We use the nano, tiny and small XCiT versions with 12 layers and patch-size 16, processing inputs in size $224 \times 224$.
We use the nano and small XCiT versions with 12 layers and patch-size 16, processing inputs of size $224 \times 224$.

We train all models for 400 epochs, following the XCiT training recipe \citep{el-nouby_xcit_2021}, using PyTorch \citep{paszke_automatic_2017}, on a single machine with 8 \(\times\) NVIDIA RTX A6000.
We observe a slight improvement for the AF models when training with a smaller batch size; therefore, we reduce the batch size from 1024 to 512 for the AF versions (See additional details in \Cref{sec:batch-size-choice}).

In \Cref{sec:exp-acc-consist,sec:exp-adversarial} we evaluate the baseline, APS, and AFT models using cyclic translations and implement $m/n$-fractional translation by translating in $m$ pixels the $n$-upsampled image using sinc-interpolation, as our and the APS models were initially designed under those assumptions.
In \Cref{sec:exp-realistic-shifts} we use more ``realistic'' types of translations, and add additional publicly available ViTs to the comparison.

\subsection{Accuracy and shift consistency} \label{sec:exp-acc-consist}
The results in \Cref{tab:imagenet-accuracy} (left) show the classification accuracy and consistency, defined as the percentage of validation samples whose prediction did not change after a random translation.
The alias-free models have similar accuracy to the baseline models, and much higher consistency in both integer and half-pixel translations. 
The APS models, on the other hand, achieve near-100\% consistency under integer translations, as expected. 
However, they have a more modest improvement in consistency to half-pixel shifts. 

\begin{table}[t]
\caption{\textbf{ImageNet accuracy and cyclic shift consistency.}
\textbf{Left}: Shift consistency is defined as the percentage of test samples that did not change their prediction following a random translation.
The alias-free models have similar accuracy to the baseline models, and much higher consistency in both integer and half-pixel translations. 
\textbf{Right}:  Adversarial accuracy is defined as the percentage of correctly classified samples in the worst case at the corresponding grid (\Cref{eq:adversarial-grid}). 
% The APS models, on the other hand, demonstrate improved accuracy and near 100\% consistency in integer translations, yet they have more modest improvement in fractional shifts .
% The APS models, on the other hand, are nearly 100\% consistent under integer translations, yet they have more modest improvement in fractional shifts (see \appCref{sec:app-additional-results} for more results).
See \appCref{sec:app-additional-results} for additional results.
}
  \label{tab:imagenet-accuracy}
\small
\centering
% \resizebox{1.0\linewidth}{!}{
\begin{tabular}{lccc|cc}
\toprule
Model                   & \doublelinecell{Test\\accuracy} & \doublelinecell{Integer\\shift consist.} & \doublelinecell{Half-pixel \\shift consist.}  &  \doublelinecell{Adversarial \\integer grid} & \doublelinecell{Adversarial \\half-pixel grid}\\
\midrule
XCiT-Nano (Baseline)   & 70.4 &	83.7 &	82.0 &	50.9 &	52.9          \\
XCiT-Nano-APS          &68.4 &	\textbf{100.0} &	87.5 &	68.4 &	62.9 \\
XCiT-Nano-AF (ours)    & \textbf{70.5} &	99.2 &	\textbf{98.7} &	\textbf{69.9} &	\textbf{69.5}          \\
\midrule
XCiT-Small (Baseline)  & \textbf{82.0} &	91.4 &	89.8 &	70.9 &	71.3         \\
XCiT-Small-APS         & 81.3 &	\textbf{100.0} &	94.0 &	\textbf{81.3} &	78.2         \\
XCiT-Small-AF (ours)   & 81.8 &	99.5 &	\textbf{99.4} &	81.3 &	\textbf{81.1}           \\

\bottomrule
\end{tabular}
% }
\end{table}

\subsection{Adversarial robustness} \label{sec:exp-adversarial}
% \todo{maybe we can converge 4.1 and 4.2 to one subsection, just need to find an appropriate title}
To show a practical implication of shift consistency, we ask whether an adversary, free to translate the image within a prespecified grid, can find any shift that flips the label.
We define adversarial accuracy as the fraction of images that are classified correctly in the worst‑case at this grid.
In \Cref{tab:imagenet-accuracy} (right), we show results for cyclic integer and half-pixel translations, reporting adversarial accuracy over the following grids

\begin{equation} \label{eq:adversarial-grid}
    T_\mathrm{integer} = \left\{\,\left(i,j\right)\,\middle|\, -6 \le i,j \le 6\,\right\}
    \qquad
    T_\mathrm{half} = \left\{\,\left(\tfrac{i}{2},\tfrac{j}{2}\right)\,\middle|\, -6 \le i,j \le 6\,\right\}
\end{equation}

The AFT models maintain high accuracy under both integer and half-pixel attacks, having 2\% relative accuracy reduction in the nano version and less than 1\% reduction in the small model. This is in contrast to the baseline models, with relative accuracy reductions of 25\% and 14\% in the nano and small models respectively.
As expected, the APS models maintain their accuracy under the integer grid adversarial attacks. 
Their accuracy under half-pixel grid attacks decreases slightly in comparison to the baseline models. 
The reason for this is that the APS is invariant to any two half-pixel translations, as these differ in exact integer translations. 
We expect the APS accuracy to decrease more under arbitrary fractional translations, as can be seen in \Cref{sec:exp-realistic-shifts} and \citep{michaeli_alias-free_2023}. See \appCref{sec:app-additional-results} for additional results.

\subsection{Robustness to realistic shifts}\label{sec:exp-realistic-shifts}
The experiments above use cyclic translations, which may leave unnatural image artifacts in realistic cases (where the input is not a sample of some periodic signal).
We therefore test two more realistic perturbations and measure adversarial accuracy exactly as in \Cref{sec:exp-adversarial}. See \Cref{sec:translation-visualization} for visualizations of the used translations.

\begin{itemize}[leftmargin=12pt]
    \item \textbf{Crop‑shifts.}  
The image is first center-cropped, then cropped in offsets \( (\delta_x,\delta_y) \) with \( |\delta_x|,|\delta_y|\le s \). 
This mimics a camera translation that moves content out of view instead of wrapping it around.

    \item  \textbf{Bilinear fractional shifts.}  
To simulate sub‑pixel motion, we translate the image by \( (\delta_x / 6,\delta_y / 6) \) with \( |\delta_x|,|\delta_y|\le s \), using bilinear interpolation. Here, we leave an edge of one pixel of the original image in each direction to avoid edge artifacts.
\end{itemize}

We compare the baseline, APS and AFT XCiT-Small models with other publicly available trained models, in similar scale as XCiT-small (26M) (indicates number of parameters): CvT-13 (20M), Swin-T (28M), and ViT-Base (86M).
We repeat these experiments with \(s\) (max-shift) in the range 0 to 6.
The results in \Cref{fig:realistic-shifts} show our model has improved robustness to both these types of realistic shifts, despite not being specifically designed for them.
Among the additional baselines, the vanilla ViT degrades the most under crop-shifts, whereas under bilinear fractional shifts it is surprisingly competitive and in fact better than the hybrid backbones (XCiT, Swin, CvT).

% ---------- begin figure ----------
\begin{figure}[t] % or \begin{figure*}[t] for full width
  \centering
  % --- first sub‑figure ---
  \begin{subfigure}[b]{0.49\linewidth}
    \centering
    \includegraphics[width=\linewidth]{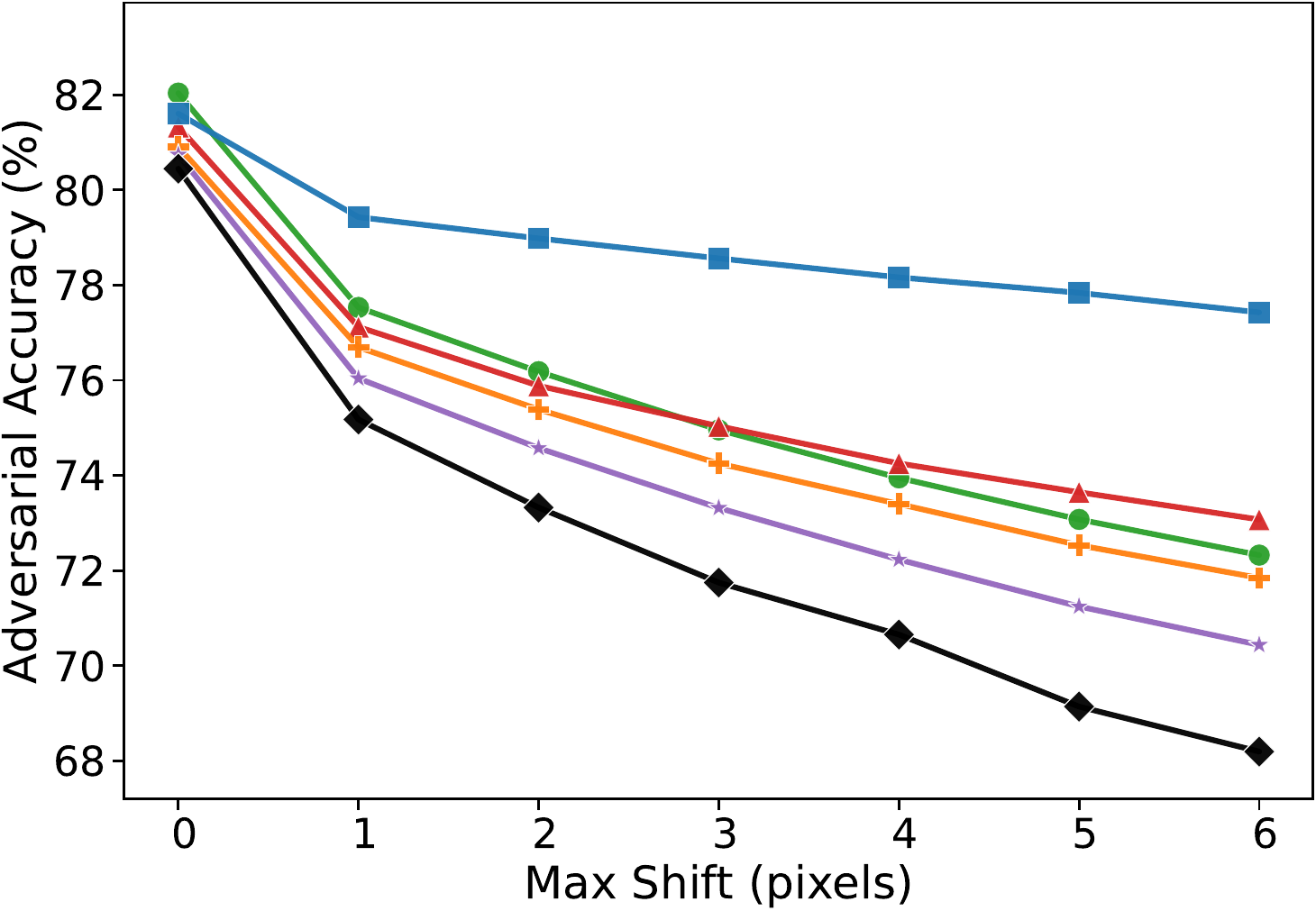}
    \caption{Adversarial ``Crop-shifts''}
    \label{fig:left}
  \end{subfigure}
  \hfill
  % --- second sub‑figure ---
  \begin{subfigure}[b]{0.49\linewidth}
    \centering
    \includegraphics[width=\linewidth]{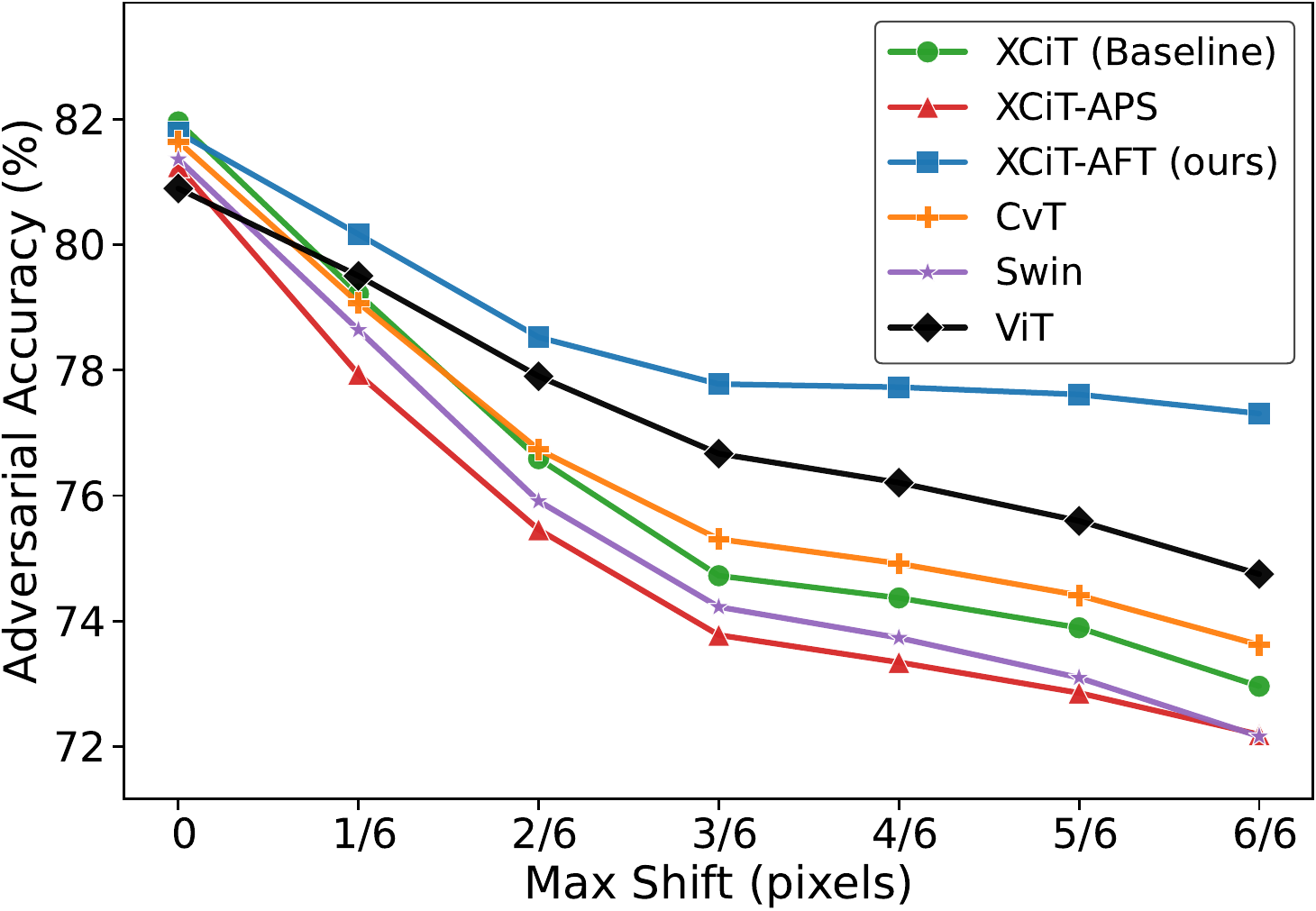}
    \caption{Adversarial Bilinear fractional shifts}
    \label{fig:right}
  \end{subfigure}
  \caption{
  \textbf{ImageNet adversarial accuracy under realistic translations.}
Adversarial accuracies under (a) "Crop-shifts," simulating camera translations, and (b) "Bilinear fractional shifts," simulating realistic sub-pixel image translations. The Alias-Free Transformer (AFT) consistently outperforms baseline XCiT, APS, and other vision transformer variants (CvT, Swin and ViT), demonstrating superior robustness against realistic translations.
  }
  \label{fig:realistic-shifts}
\end{figure}
% ---------- end figure ----------

\subsection{Ablation study} \label{sec:ablation-study}
We conduct an ablation study on XCiT-Nano to evaluate the impact of each of the alias-free modifications on the model performance.
We train the baseline model on ImageNet with one specific change, and report the results in \Cref{table:aft‐ablation}.
Surprisingly, replacing the layer norm with alias-free layer-norm and replacing class attention with average pooling cause a much larger degradation in accuracy in comparison to the marginal degradation in the final alias-free model.
On the other hand, removing the positional encoding leads to a small improvement in accuracy, emphasizing that it may be unnecessary in hybrid architectures.

\begin{figure}[ht]         % you can also use \begin{center} ... \end{center}
  \centering

  %---- first minipage: will become Table 3
  \begin{minipage}[t]{0.48\textwidth}
      \vspace{0pt}  % <–– anchors this box at the top

    \captionof{table}{\textbf{Ablation study on alias‐free components of XCiT‐Nano (ImageNet).}
      Evaluation of isolated alias‐free modifications to the baseline model. Alias‐free layer normalization (AF‐LayerNorm) and replacing class‐attention with average pooling (AvgPool) result in notable accuracy degradation individually. Removing positional encoding slightly improves performance. The final combined alias‐free model retains near‐baseline accuracy.}
    \label{table:aft‐ablation}
    % \small
    \centering
      \resizebox{1.0\linewidth}{!}{
            \begin{tabular}{lcc}
              \toprule
              Model                  & Accuracy & Change (\%) \\ 
              \midrule
              Baseline               & 70.4     & –           \\
              \midrule
              Cyclic convolution     & 70.4     & +0.0\%      \\
              AvgPool                & 69.1     & –1.8\%      \\
              AF‐LayerNorm           & 69.6     & –1.1\%      \\
              No positional encoding & 70.7     & +0.4\%      \\
              \midrule
              AF (AvgPool)             & 70.4     & +0.0\% \\
              AF (AF Class Attention) & 70.6     & +0.3\% \\
              \bottomrule
            \end{tabular}
        }
  \end{minipage}
  \hfill
  %---- second minipage: will become Table 4
  \begin{minipage}[t]{0.48\textwidth}
      \vspace{0pt}  % <–– anchors this box at the top

    \captionof{table}{\textbf{Training runtime.} Train time was measured on 8 \(\times\) NVIDIA RTX A6000 using batch size 1024 for the baseline model and batch size 512 in the APS and AF models, due to memory constraints.}
    \label{table:runtime‐results}
    \small
    \centering
    % \vspace{60pt}  % <–– anchors this box at the top
    \begin{tabular}{lc}
      \toprule
      Model                   & \doublelinecell{Train time \\ {[}hours{]}} \\
      \midrule
      XCiT‐Small (Baseline)   & 69                    \\
      XCiT‐Small‐APS          & 98                   \\
      XCiT‐Small‐AF (ours)    & 487                   \\
      \bottomrule
    \end{tabular}
  \end{minipage}

\end{figure}

\section{Related work}
A few studies have shown a broad effect of aliasing in deep neural networks, e.g., breaking shift-equivariance in convnets \citep{zhang_making_2019, azulay_why_2019}, inconsistencies in image generation \citep{karras_alias-free_2021, zhou_alias-free_2025, yu_dmfft_2025}, and breaking continuous-discrete equivalence in neural operators \citep{zheng_alias-free_2024, tiwari_cono_2023, fanaskov_spectral_2022, bartolucci_representation_2023}.

\paragraph{Shift invariant convnets.}
For a long time, convolutional neural networks have held dominance in vision thanks to their useful inductive biases, including translation invariance.
However, previous studies have shown their output can in fact change in a large extent due to small translations \citep{azulay_why_2019, engstrom_exploring_2017}.
This has led to extensive research to find the root causes and resolve this problem.
\citet{zhang_making_2019, azulay_why_2019} have identified shift-invariance breaks as a result of aliasing in downsampling and nonlinear layers.
Consequently, other studies suggested solving this problem by plugging a low-pass filter before downsampling \citep{zhang_making_2019,he_delving_2015, grabinski_fix_2023}, and preventing aliasing in nonlinearities by using smooth activation functions \citep{hossain_anti-aliasing_2021, vasconcelos_effective_2020, michaeli_alias-free_2023} and by applying activations after upsampling \citep{karras_alias-free_2021, michaeli_alias-free_2023, wei_aliasing-free_2022}.
Other works suggested maintaining shift-invariance in convnets by downsampling on adaptive grids \citep{chaman_truly_2020, rojas-gomez_learnable_2022}. 
Specifically, the adaptive sampling method (APS) \citep{chaman_truly_2020, rojas-gomez_learnable_2022} has been shown to retain perfect consistency to integer 
cyclic translations, while the anti-aliasing approach maintains consistency in fractional translations as well.
Worth mentioning here are recent works that propose transforming the input into a ``canonic'' shift-invariant representation \citep{shifman_lost_2024, demirel_shifting_2025}, which theoretically makes equivariance of the following neural network unnecessary.

\paragraph{Shift invariant transformers.} Vision transformers have gained dominance despite not having the convnet priors and being even more sensitive to image translations.
Some studies have proposed hierarchical ViT architectures similar to convnets \citep{liu_swin_2021, 
fan_multiscale_2021,
dong_cswin_2022,
wu_rethinking_2021,
ryali_hiera_2023},  
and ``hybrid'' architectures including convolutional layers directly \citep{wu_cvt_2021}.
Furthermore, some works have drawn parallels between self-attention and dynamic convolutions \citep{han_connection_2021, cordonnier_relationship_2019, andreoli_convolution_2020}, motivating reinterpretations of the attention mechanism through a convolutional lens.
Few studies have taken inspiration from these studies aiming to retain shift-invariance in convnets and implemented their ideas into ViTs.
\citet{qian_blending_2021} proposes plugging a low-pass filter post self-attention to reduce aliasing, partially improving consistency similar to \citet{zhang_making_2019}.
\citet{rojas-gomez_making_2024, ding_reviving_2023} adapt the adaptive sampling method (APS) into ViT layers, certifying consistency to integer cyclic translations.
Other studies proposed more general framework for group equivariant attention \citep{romero_group_2021, xu_e2-equivariant_2023}.
Yet, to the best of our knowledge, no other work has dealt with the invariance of ViTs to fractional shifts.

\paragraph{Linear attention.}
The standard Transformer architecture relies on softmax-based attention~\citep{vaswani_attention_2017}, characterized by quadratic computational complexity in the number of tokens. 
To overcome scalability limitations, linear and kernel-based attention mechanisms have been proposed~\citep{katharopoulos_transformers_2020, choromanski_rethinking_2022}, substantially reducing complexity while maintaining performance. 
% For example, linear attention leverages feature map approximations of the softmax kernel, achieving significant efficiency gains~\citep{wang_linformer_2020}. 
For example, linear attention leverages a linear approximation of the softmax kernel, achieving significant efficiency gains~\citep{wang_linformer_2020}. 
% Recent models like SimA~\citep{koohpayegani_sima_2024} utilize simplified normalization schemes to eliminate the computational overhead of exponential operations, introducing beneficial inductive biases such as smoother attention distributions, improved stability, and scalability.
In vision, models like SimA~\citep{koohpayegani_sima_2024} and XCiT \cite{el-nouby_xcit_2021} utilize simplified normalization schemes to replace the expensive softmax operation, enabling to avoid a direct computation of full attention maps.

\section{Discussion and limitations}
In this paper, we propose a shift-invariant alias-free vision transformer by introducing a class of shift-equivariant attention operations.
We show that the AFT maintains competitive accuracy and superior robustness to fractional shifts, compared to other ViTs. 
We next discuss a few of our model limitations.

\label{sec:aft-discussion}
\paragraph{Polynomial activation function}
\citet{michaeli_alias-free_2023} propose replacing nonlinear activation functions, such as GELU, with polynomial approximations. 
This mathematically ensures the overall activation layer (including upsampling) is shift-equivariant w.r.t.~continuous domain, namely perfectly consistent to fractional shifts.
In our experiments, we observe that this leads to a significant reduction in performance, which is caused specifically due to the activation replacement in the patch-embedding stage (see \appCref{sec:app-additional-results}).
On the other hand, we observe that the filtered activation function using GELU leads to a rather small reduction in consistency.
Notably, the certified consistency is limited to cyclic shifts and fractional shifts performed by sinc-interpolation, both induce artifacts that do not appear in natural images. 
We find that similar to the AFC, our model also has significant improvement in robustness to realistic translations despite the imperfect consistency to cyclic shifts. 

\paragraph{Runtime performance}
The alias-free modifications we perform in our model to attain shift invariance, despite not requiring any additional parameter, cause a substantial runtime increase, as shown in \Cref{table:runtime‐results}.
This is mainly due to the downsampling and upsampling, which are implemented in the Fourier domain using FFT, similar to other works \citep{michaeli_alias-free_2023, zhou_alias-free_2025, grabinski_frequencylowcut_2022, grabinski_fix_2023, rahman_truly_2023}, seemingly underoptimized for GPU as of today \citep{fu_flashfftconv_2023,spanio_torchfx_2025}.

\begin{ack}
The research of DS was funded by the European Union (ERC, A-B-C-Deep, 101039436). 
Views and opinions expressed are however those of the author only and do not necessarily reflect those of the European Union or the European Research Council Executive Agency (ERCEA). 
Neither the European Union nor the granting authority can be held responsible for them. 
DS also acknowledges the support of the Schmidt Career Advancement Chair in AI.
\end{ack}

\bibliographystyle{plainnat}
\bibliography{references}

\begin{thebibliography}{70}
\providecommand{\natexlab}[1]{#1}
\providecommand{\url}[1]{\texttt{#1}}
\expandafter\ifx\csname urlstyle\endcsname\relax
  \providecommand{\doi}[1]{doi: #1}\else
  \providecommand{\doi}{doi: \begingroup \urlstyle{rm}\Url}\fi

\bibitem[Andreoli(2020)]{andreoli_convolution_2020}
Jean-Marc Andreoli.
\newblock Convolution, attention and structure embedding, March 2020.
\newblock URL \url{http://arxiv.org/abs/1905.01289}.
\newblock arXiv:1905.01289 [cs].

\bibitem[Anjum(2024)]{anjum_advancing_2024}
Md~Fahim Anjum.
\newblock Advancing {Diffusion} {Models}: {Alias}-{Free} {Resampling} and {Enhanced} {Rotational} {Equivariance}, November 2024.
\newblock URL \url{http://arxiv.org/abs/2411.09174}.
\newblock arXiv:2411.09174 [cs].

\bibitem[Arezki et~al.(2023)Arezki, Feng, and Mokraoui]{arezki_convolutional_2023}
Bouzid Arezki, Fangchen Feng, and Anissa Mokraoui.
\newblock Convolutional {Transformer}-{Based} {Image} {Compression}.
\newblock In \emph{2023 {Signal} {Processing}: {Algorithms}, {Architectures}, {Arrangements}, and {Applications} ({SPA})}, pages 154--159, September 2023.
\newblock \doi{10.23919/SPA59660.2023.10274433}.
\newblock URL \url{http://arxiv.org/abs/2409.04118}.
\newblock arXiv:2409.04118 [eess].

\bibitem[Azulay and Weiss(2019)]{azulay_why_2019}
Aharon Azulay and Yair Weiss.
\newblock Why do deep convolutional networks generalize so poorly to small image transformations?
\newblock \emph{Journal of Machine Learning Research}, 20:\penalty0 1--25, 2019.
\newblock URL \url{https://youtu.be/MpUdRacvkWk}.
\newblock arXiv: 1805.12177v4 ISBN: 1805.12177v4.

\bibitem[Bartolucci et~al.(2023)Bartolucci, de~Bézenac, Raonić, Molinaro, Mishra, and Alaifari]{bartolucci_representation_2023}
Francesca Bartolucci, Emmanuel de~Bézenac, Bogdan Raonić, Roberto Molinaro, Siddhartha Mishra, and Rima Alaifari.
\newblock Representation {Equivalent} {Neural} {Operators}: a {Framework} for {Alias}-free {Operator} {Learning}.
\newblock \emph{Advances in Neural Information Processing Systems}, 36, May 2023.
\newblock ISSN 10495258.
\newblock URL \url{https://arxiv.org/abs/2305.19913v2}.
\newblock arXiv: 2305.19913 Publisher: Neural information processing systems foundation.

\bibitem[Chaman and Dokmanić(2020)]{chaman_truly_2020}
Anadi Chaman and Ivan Dokmanić.
\newblock Truly shift-invariant convolutional neural networks.
\newblock \emph{Proceedings of the IEEE Computer Society Conference on Computer Vision and Pattern Recognition}, pages 3772--3782, November 2020.
\newblock ISSN 10636919.
\newblock \doi{10.48550/arxiv.2011.14214}.
\newblock URL \url{https://arxiv.org/abs/2011.14214v4}.
\newblock arXiv: 2011.14214 Publisher: IEEE Computer Society ISBN: 9781665445092.

\bibitem[Choromanski et~al.(2022)Choromanski, Likhosherstov, Dohan, Song, Gane, Sarlos, Hawkins, Davis, Mohiuddin, Kaiser, Belanger, Colwell, and Weller]{choromanski_rethinking_2022}
Krzysztof Choromanski, Valerii Likhosherstov, David Dohan, Xingyou Song, Andreea Gane, Tamas Sarlos, Peter Hawkins, Jared Davis, Afroz Mohiuddin, Lukasz Kaiser, David Belanger, Lucy Colwell, and Adrian Weller.
\newblock Rethinking {Attention} with {Performers}, November 2022.
\newblock URL \url{http://arxiv.org/abs/2009.14794}.
\newblock arXiv:2009.14794 [cs].

\bibitem[Chu et~al.(2023)Chu, Tian, Zhang, Wang, and Shen]{chu_conditional_2023}
Xiangxiang Chu, Zhi Tian, Bo~Zhang, Xinlong Wang, and Chunhua Shen.
\newblock Conditional {Positional} {Encodings} for {Vision} {Transformers}, February 2023.
\newblock URL \url{http://arxiv.org/abs/2102.10882}.
\newblock arXiv:2102.10882 [cs].

\bibitem[Cordonnier et~al.(2019)Cordonnier, Loukas, and Jaggi]{cordonnier_relationship_2019}
Jean~Baptiste Cordonnier, Andreas Loukas, and Martin Jaggi.
\newblock On the {Relationship} between {Self}-{Attention} and {Convolutional} {Layers}.
\newblock \emph{8th International Conference on Learning Representations, ICLR 2020}, November 2019.
\newblock URL \url{https://arxiv.org/abs/1911.03584v2}.
\newblock arXiv: 1911.03584 Publisher: International Conference on Learning Representations, ICLR.

\bibitem[Cotogni et~al.(2023)Cotogni, Yang, Cusano, Bagdanov, and Weijer]{cotogni_exemplar-free_2023}
Marco Cotogni, Fei Yang, Claudio Cusano, Andrew~D. Bagdanov, and Joost van~de Weijer.
\newblock Exemplar-free {Continual} {Learning} of {Vision} {Transformers} via {Gated} {Class}-{Attention} and {Cascaded} {Feature} {Drift} {Compensation}, July 2023.
\newblock URL \url{http://arxiv.org/abs/2211.12292}.
\newblock arXiv:2211.12292 [cs].

\bibitem[Demirel and Holz(2025)]{demirel_shifting_2025}
Berken~Utku Demirel and Christian Holz.
\newblock Shifting the {Paradigm}: {A} {Diffeomorphism} {Between} {Time} {Series} {Data} {Manifolds} for {Achieving} {Shift}-{Invariancy} in {Deep} {Learning}, February 2025.
\newblock URL \url{http://arxiv.org/abs/2502.19921}.
\newblock arXiv:2502.19921 [cs].

\bibitem[Deng et~al.(2009)Deng, Dong, Socher, Li, {Kai Li}, and {Li Fei-Fei}]{deng_imagenet_2009}
Jia Deng, Wei Dong, Richard Socher, Li-Jia Li, {Kai Li}, and {Li Fei-Fei}.
\newblock {ImageNet}: {A} large-scale hierarchical image database.
\newblock In \emph{2009 {IEEE} {Conference} on {Computer} {Vision} and {Pattern} {Recognition}}, pages 248--255. IEEE, June 2009.
\newblock ISBN 978-1-4244-3992-8.
\newblock \doi{10.1109/CVPR.2009.5206848}.

\bibitem[Ding et~al.(2023)Ding, Soselia, Armstrong, Su, and Huang]{ding_reviving_2023}
Peijian Ding, Davit Soselia, Thomas Armstrong, Jiahao Su, and Furong Huang.
\newblock Reviving {Shift} {Equivariance} in {Vision} {Transformers}, June 2023.
\newblock URL \url{http://arxiv.org/abs/2306.07470}.
\newblock arXiv:2306.07470 [cs].

\bibitem[Dong et~al.(2022)Dong, Bao, Chen, Zhang, Yu, Yuan, Chen, and Guo]{dong_cswin_2022}
Xiaoyi Dong, Jianmin Bao, Dongdong Chen, Weiming Zhang, Nenghai Yu, Lu~Yuan, Dong Chen, and Baining Guo.
\newblock {CSWin} {Transformer}: {A} {General} {Vision} {Transformer} {Backbone} with {Cross}-{Shaped} {Windows}, January 2022.
\newblock URL \url{http://arxiv.org/abs/2107.00652}.
\newblock arXiv:2107.00652 [cs].

\bibitem[Dosovitskiy et~al.(2020)Dosovitskiy, Beyer, Kolesnikov, Weissenborn, Zhai, Unterthiner, Dehghani, Minderer, Heigold, Gelly, Uszkoreit, and Houlsby]{dosovitskiy_image_2020}
Alexey Dosovitskiy, Lucas Beyer, Alexander Kolesnikov, Dirk Weissenborn, Xiaohua Zhai, Thomas Unterthiner, Mostafa Dehghani, Matthias Minderer, Georg Heigold, Sylvain Gelly, Jakob Uszkoreit, and Neil Houlsby.
\newblock An {Image} is {Worth} 16x16 {Words}: {Transformers} for {Image} {Recognition} at {Scale}.
\newblock \emph{ICLR 2021 - 9th International Conference on Learning Representations}, October 2020.
\newblock URL \url{https://arxiv.org/abs/2010.11929v2}.
\newblock arXiv: 2010.11929 Publisher: International Conference on Learning Representations, ICLR.

\bibitem[El-Nouby et~al.(2021)El-Nouby, Touvron, Caron, Bojanowski, Douze, Joulin, Laptev, Neverova, Synnaeve, Verbeek, and Jegou]{el-nouby_xcit_2021}
Alaaeldin El-Nouby, Hugo Touvron, Mathilde Caron, Piotr Bojanowski, Matthijs Douze, Armand Joulin, Ivan Laptev, Natalia Neverova, Gabriel Synnaeve, Jakob Verbeek, and Hervé Jegou.
\newblock {XCiT}: {Cross}-{Covariance} {Image} {Transformers}, June 2021.
\newblock URL \url{http://arxiv.org/abs/2106.09681}.
\newblock arXiv:2106.09681 [cs].

\bibitem[Engstrom et~al.(2017)Engstrom, Tran, Tsipras, Schmidt, and Madry]{engstrom_exploring_2017}
Logan Engstrom, Brandon Tran, Dimitris Tsipras, Ludwig Schmidt, and Aleksander Madry.
\newblock Exploring the {Landscape} of {Spatial} {Robustness}.
\newblock \emph{36th International Conference on Machine Learning, ICML 2019}, 2019-June:\penalty0 3218--3238, December 2017.
\newblock \doi{10.48550/arxiv.1712.02779}.
\newblock URL \url{https://arxiv.org/abs/1712.02779v4}.
\newblock arXiv: 1712.02779 Publisher: International Machine Learning Society (IMLS) ISBN: 9781510886988.

\bibitem[Fan et~al.(2021)Fan, Xiong, Mangalam, Li, Yan, Malik, and Feichtenhofer]{fan_multiscale_2021}
Haoqi Fan, Bo~Xiong, Karttikeya Mangalam, Yanghao Li, Zhicheng Yan, Jitendra Malik, and Christoph Feichtenhofer.
\newblock Multiscale {Vision} {Transformers}.
\newblock \emph{Proceedings of the IEEE International Conference on Computer Vision}, pages 6804--6815, April 2021.
\newblock ISSN 15505499.
\newblock \doi{10.1109/ICCV48922.2021.00675}.
\newblock URL \url{https://arxiv.org/abs/2104.11227v1}.
\newblock arXiv: 2104.11227 Publisher: Institute of Electrical and Electronics Engineers Inc. ISBN: 9781665428125.

\bibitem[Fanaskov and Oseledets(2022)]{fanaskov_spectral_2022}
V.~Fanaskov and I.~Oseledets.
\newblock Spectral {Neural} {Operators}, May 2022.
\newblock URL \url{http://arxiv.org/abs/2205.10573}.
\newblock arXiv:2205.10573 [cs, math].

\bibitem[Fu et~al.(2023)Fu, Kumbong, Nguyen, and Ré]{fu_flashfftconv_2023}
Daniel~Y. Fu, Hermann Kumbong, Eric Nguyen, and Christopher Ré.
\newblock {FlashFFTConv}: {Efficient} {Convolutions} for {Long} {Sequences} with {Tensor} {Cores}, November 2023.
\newblock URL \url{http://arxiv.org/abs/2311.05908}.
\newblock arXiv:2311.05908 [cs].

\bibitem[Grabinski et~al.(2022)Grabinski, Jung, Keuper, and Keuper]{grabinski_frequencylowcut_2022}
Julia Grabinski, Steffen Jung, Janis Keuper, and Margret Keuper.
\newblock {FrequencyLowCut} {Pooling} - {Plug} and {Play} {Against} {Catastrophic} {Overfitting}.
\newblock In Shai Avidan, Gabriel Brostow, Moustapha Cissé, Giovanni~Maria Farinella, and Tal Hassner, editors, \emph{Computer {Vision} – {ECCV} 2022}, pages 36--57, Cham, 2022. Springer Nature Switzerland.
\newblock ISBN 978-3-031-19781-9.
\newblock \doi{10.1007/978-3-031-19781-9_3}.

\bibitem[Grabinski et~al.(2023)Grabinski, Keuper, and Keuper]{grabinski_fix_2023}
Julia Grabinski, Janis Keuper, and Margret Keuper.
\newblock Fix your downsampling {ASAP}! {Be} natively more robust via {Aliasing} and {Spectral} {Artifact} free {Pooling}, July 2023.
\newblock URL \url{http://arxiv.org/abs/2307.09804}.
\newblock arXiv:2307.09804 [cs].

\bibitem[Graham et~al.(2021)Graham, El-Nouby, Touvron, Stock, Joulin, Jegou, and Douze]{graham_levit_2021}
Ben Graham, Alaaeldin El-Nouby, Hugo Touvron, Pierre Stock, Armand Joulin, Herve Jegou, and Matthijs Douze.
\newblock {LeViT}: a {Vision} {Transformer} in {ConvNet}’s {Clothing} for {Faster} {Inference}.
\newblock In \emph{2021 {IEEE}/{CVF} {International} {Conference} on {Computer} {Vision} ({ICCV})}, pages 12239--12249, Montreal, QC, Canada, October 2021. IEEE.
\newblock ISBN 978-1-6654-2812-5.
\newblock \doi{10.1109/ICCV48922.2021.01204}.
\newblock URL \url{https://ieeexplore.ieee.org/document/9711161/}.

\bibitem[Gunasekar(2022)]{gunasekar_generalization_2022}
Suriya Gunasekar.
\newblock Generalization to translation shifts: a study in architectures and augmentations, July 2022.
\newblock URL \url{https://arxiv.org/abs/2207.02349v1}.
\newblock arXiv: 2207.02349.

\bibitem[Han et~al.(2020)Han, Wang, Chen, Chen, Guo, Liu, Tang, Xiao, Xu, Xu, Yang, Zhang, and Tao]{han_survey_2020}
Kai Han, Yunhe Wang, Hanting Chen, Xinghao Chen, Jianyuan Guo, Zhenhua Liu, Yehui Tang, An~Xiao, Chunjing Xu, Yixing Xu, Zhaohui Yang, Yiman Zhang, and Dacheng Tao.
\newblock A {Survey} on {Visual} {Transformer}.
\newblock \emph{IEEE Transactions on Pattern Analysis and Machine Intelligence}, 45\penalty0 (1):\penalty0 87--110, December 2020.
\newblock \doi{10.1109/TPAMI.2022.3152247}.
\newblock URL \url{http://arxiv.org/abs/2012.12556}.
\newblock arXiv: 2012.12556v6 Publisher: IEEE Computer Society.

\bibitem[Han et~al.(2021)Han, Fan, Dai, Sun, Cheng, Liu, and Wang]{han_connection_2021}
Qi~Han, Zejia Fan, Qi~Dai, Lei Sun, Ming~Ming Cheng, Jiaying Liu, and Jingdong Wang.
\newblock On the {Connection} between {Local} {Attention} and {Dynamic} {Depth}-wise {Convolution}.
\newblock \emph{ICLR 2022 - 10th International Conference on Learning Representations}, 3, June 2021.
\newblock URL \url{https://arxiv.org/abs/2106.04263v5}.
\newblock arXiv: 2106.04263 Publisher: International Conference on Learning Representations, ICLR.

\bibitem[Hao et~al.(2023)Hao, Wang, Su, Ying, Dong, Liu, Cheng, Song, and Zhu]{hao_gnot_2023}
Zhongkai Hao, Zhengyi Wang, Hang Su, Chengyang Ying, Yinpeng Dong, Songming Liu, Ze~Cheng, Jian Song, and Jun Zhu.
\newblock {GNOT}: {A} {General} {Neural} {Operator} {Transformer} for {Operator} {Learning}, June 2023.
\newblock URL \url{http://arxiv.org/abs/2302.14376}.
\newblock arXiv:2302.14376 [cs].

\bibitem[He et~al.(2015)He, Zhang, Ren, and Sun]{he_delving_2015}
Kaiming He, Xiangyu Zhang, Shaoqing Ren, and Jian Sun.
\newblock Delving {Deep} into {Rectifiers}: {Surpassing} {Human}-{Level} {Performance} on {ImageNet} {Classification}.
\newblock In \emph{2015 {IEEE} {International} {Conference} on {Computer} {Vision} ({ICCV})}, pages 1026--1034, Santiago, Chile, December 2015. IEEE.
\newblock ISBN 978-1-4673-8391-2.
\newblock \doi{10.1109/ICCV.2015.123}.
\newblock URL \url{http://ieeexplore.ieee.org/document/7410480/}.

\bibitem[Hendrycks and Gimpel(2023)]{hendrycks_gaussian_2023}
Dan Hendrycks and Kevin Gimpel.
\newblock Gaussian {Error} {Linear} {Units} ({GELUs}), June 2023.
\newblock URL \url{http://arxiv.org/abs/1606.08415}.
\newblock arXiv:1606.08415 [cs].

\bibitem[Hossain et~al.(2021)Hossain, Teng, Sohel, and Lu]{hossain_anti-aliasing_2021}
Md~Tahmid Hossain, Shyh~Wei Teng, Ferdous Sohel, and Guojun Lu.
\newblock Anti-aliasing {Deep} {Image} {Classifiers} using {Novel} {Depth} {Adaptive} {Blurring} and {Activation} {Function}, October 2021.
\newblock URL \url{https://arxiv.org/abs/2110.00899v1}.
\newblock arXiv: 2110.00899.

\bibitem[Karras et~al.(2021)Karras, Aittala, Laine, Härkönen, Hellsten, Lehtinen, and Aila]{karras_alias-free_2021}
Tero Karras, Miika Aittala, Samuli Laine, Erik Härkönen, Janne Hellsten, Jaakko Lehtinen, and Timo Aila.
\newblock Alias-{Free} {Generative} {Adversarial} {Networks}.
\newblock In \emph{Advances in {Neural} {Information} {Processing} {Systems}}, volume~34, pages 852--863. Curran Associates, Inc., 2021.
\newblock URL \url{https://arxiv.org/abs/2106.12423}.

\bibitem[Katharopoulos et~al.(2020)Katharopoulos, Vyas, Pappas, and Fleuret]{katharopoulos_transformers_2020}
Angelos Katharopoulos, Apoorv Vyas, Nikolaos Pappas, and François Fleuret.
\newblock Transformers are {RNNs}: {Fast} {Autoregressive} {Transformers} with {Linear} {Attention}.
\newblock In \emph{Proceedings of the 37th {International} {Conference} on {Machine} {Learning}}, pages 5156--5165. PMLR, November 2020.
\newblock URL \url{https://proceedings.mlr.press/v119/katharopoulos20a.html}.
\newblock ISSN: 2640-3498.

\bibitem[Khan et~al.(2021)Khan, Naseer, Hayat, Zamir, Khan, and Shah]{khan_transformers_2021}
Salman Khan, Muzammal Naseer, Munawar Hayat, Syed~Waqas Zamir, Fahad~Shahbaz Khan, and Mubarak Shah.
\newblock Transformers in {Vision}: {A} {Survey}.
\newblock \emph{ACM Computing Surveys}, 54\penalty0 (10), January 2021.
\newblock ISSN 15577341.
\newblock \doi{10.1145/3505244}.
\newblock URL \url{https://arxiv.org/abs/2101.01169v5}.
\newblock arXiv: 2101.01169 Publisher: Association for Computing Machinery.

\bibitem[Koohpayegani and Pirsiavash(2024)]{koohpayegani_sima_2024}
Soroush~Abbasi Koohpayegani and Hamed Pirsiavash.
\newblock {SimA}: {Simple} {Softmax}-{Free} {Attention} for {Vision} {Transformers}.
\newblock In \emph{Proceedings of the {IEEE}/{CVF} {Winter} {Conference} on {Applications} of {Computer} {Vision} ({WACV})}, pages 2607--2617, January 2024.

\bibitem[Krause et~al.(2013)Krause, Stark, Deng, and Fei-Fei]{krause_3d_2013}
Jonathan Krause, Michael Stark, Jia Deng, and Li~Fei-Fei.
\newblock {3D} {Object} {Representations} for {Fine}-{Grained} {Categorization}.
\newblock In \emph{2013 {IEEE} {International} {Conference} on {Computer} {Vision} {Workshops}}, pages 554--561, December 2013.
\newblock \doi{10.1109/ICCVW.2013.77}.
\newblock URL \url{https://ieeexplore.ieee.org/document/6755945}.

\bibitem[Krizhevsky(2009)]{krizhevsky_learning_2009}
Alex Krizhevsky.
\newblock Learning {Multiple} {Layers} of {Features} from {Tiny} {Images}.
\newblock 2009.

\bibitem[Li et~al.(2024)Li, Liu, Peng, Yuan, and Wang]{li_transformer-based_2024}
Zhijie Li, Tianyuan Liu, Wenhui Peng, Zelong Yuan, and Jianchun Wang.
\newblock A transformer-based neural operator for large-eddy simulation of turbulence, June 2024.
\newblock URL \url{http://arxiv.org/abs/2403.16026}.
\newblock arXiv:2403.16026 [physics].

\bibitem[Liu et~al.(2021)Liu, Lin, Cao, Hu, Wei, Zhang, Lin, and Guo]{liu_swin_2021}
Ze~Liu, Yutong Lin, Yue Cao, Han Hu, Yixuan Wei, Zheng Zhang, Stephen Lin, and Baining Guo.
\newblock Swin {Transformer}: {Hierarchical} {Vision} {Transformer} using {Shifted} {Windows}.
\newblock \emph{Proceedings of the IEEE International Conference on Computer Vision}, pages 9992--10002, March 2021.
\newblock ISSN 15505499.
\newblock \doi{10.1109/ICCV48922.2021.00986}.
\newblock URL \url{https://arxiv.org/abs/2103.14030v2}.
\newblock arXiv: 2103.14030 Publisher: Institute of Electrical and Electronics Engineers Inc. ISBN: 9781665428125.

\bibitem[Michaeli et~al.(2023)Michaeli, Michaeli, and Soudry]{michaeli_alias-free_2023}
Hagay Michaeli, Tomer Michaeli, and Daniel Soudry.
\newblock Alias-{Free} {Convnets}: {Fractional} {Shift} {Invariance} via {Polynomial} {Activations}.
\newblock In \emph{Proceedings of the {IEEE}/{CVF} {Conference} on {Computer} {Vision} and {Pattern} {Recognition} ({CVPR})}, pages 16333--16342, 2023.
\newblock \doi{10.48550/arXiv.2303.08085}.
\newblock ISSN: 23318422.

\bibitem[Paszke et~al.(2017)Paszke, Gross, Chintala, Chanan, Yang, DeVito, Lin, Desmaison, Antiga, and Lerer]{paszke_automatic_2017}
Adam Paszke, Sam Gross, Soumith Chintala, Gregory Chanan, Edward Yang, Zachary DeVito, Zeming Lin, Alban Desmaison, Luca Antiga, and Adam Lerer.
\newblock Automatic differentiation in {PyTorch}.
\newblock 2017.

\bibitem[Qian et~al.(2021)Qian, Shao, Zhu, Li, and Jia]{qian_blending_2021}
Shengju Qian, Hao Shao, Yi~Zhu, Mu~Li, and Jiaya Jia.
\newblock Blending {Anti}-{Aliasing} into {Vision} {Transformer}.
\newblock \emph{Advances in Neural Information Processing Systems}, 34:\penalty0 5416--5429, December 2021.
\newblock URL \url{https://github.com/amazon-research/anti-aliasing-transformer.}

\bibitem[Rahman and Yeh(2023)]{rahman_truly_2023}
Md~Ashiqur Rahman and Raymond~A. Yeh.
\newblock Truly {Scale}-{Equivariant} {Deep} {Nets} with {Fourier} {Layers}, November 2023.
\newblock URL \url{http://arxiv.org/abs/2311.02922}.
\newblock arXiv:2311.02922 [cs].

\bibitem[Raonic et~al.(2023)Raonic, Molinaro, De~Ryck, Rohner, Bartolucci, Alaifari, Mishra, and de~Bézenac]{raonic_convolutional_2023}
Bogdan Raonic, Roberto Molinaro, Tim De~Ryck, Tobias Rohner, Francesca Bartolucci, Rima Alaifari, Siddhartha Mishra, and Emmanuel de~Bézenac.
\newblock Convolutional {Neural} {Operators} for robust and accurate learning of {PDEs}.
\newblock \emph{Advances in Neural Information Processing Systems}, 36:\penalty0 77187--77200, December 2023.
\newblock URL \url{https://proceedings.neurips.cc/paper_files/paper/2023/hash/f3c1951b34f7f55ffaecada7fde6bd5a-Abstract-Conference.html}.

\bibitem[Rippel et~al.(2015)Rippel, Snoek, and Adams]{rippel_spectral_2015}
Oren Rippel, Jasper Snoek, and Ryan~P. Adams.
\newblock Spectral representations for convolutional neural networks.
\newblock \emph{Advances in Neural Information Processing Systems}, 2015-January:\penalty0 2449--2457, 2015.
\newblock ISSN 10495258.
\newblock arXiv: 1506.03767 Publisher: Neural information processing systems foundation.

\bibitem[Rojas-Gomez et~al.(2022)Rojas-Gomez, Lim, Schwing, Do, and Yeh]{rojas-gomez_learnable_2022}
Renan~A. Rojas-Gomez, Teck-Yian Lim, Alex Schwing, Minh Do, and Raymond~A. Yeh.
\newblock Learnable {Polyphase} {Sampling} for {Shift} {Invariant} and {Equivariant} {Convolutional} {Networks}.
\newblock \emph{Advances in Neural Information Processing Systems}, 35:\penalty0 35755--35768, December 2022.
\newblock URL \url{https://proceedings.neurips.cc/paper_files/paper/2022/hash/e87b1e06be8c3594c810e8991e77ea40-Abstract-Conference.html}.

\bibitem[Rojas-Gomez et~al.(2024)Rojas-Gomez, Lim, Do, and Yeh]{rojas-gomez_making_2024}
Renan~A. Rojas-Gomez, Teck-Yian Lim, Minh~N. Do, and Raymond~A. Yeh.
\newblock Making {Vision} {Transformers} {Truly} {Shift}-{Equivariant}.
\newblock pages 5568--5577, 2024.
\newblock URL \url{https://openaccess.thecvf.com/content/CVPR2024/html/Rojas-Gomez_Making_Vision_Transformers_Truly_Shift-Equivariant_CVPR_2024_paper.html}.

\bibitem[Romero and Cordonnier(2021)]{romero_group_2021}
David~W. Romero and Jean-Baptiste Cordonnier.
\newblock Group {Equivariant} {Stand}-{Alone} {Self}-{Attention} {For} {Vision}, March 2021.
\newblock URL \url{http://arxiv.org/abs/2010.00977}.
\newblock arXiv:2010.00977 [cs].

\bibitem[Ryali et~al.(2023)Ryali, Hu, Bolya, Wei, Fan, Huang, Aggarwal, Chowdhury, Poursaeed, Hoffman, Malik, Li, and Feichtenhofer]{ryali_hiera_2023}
Chaitanya Ryali, Yuan-Ting Hu, Daniel Bolya, Chen Wei, Haoqi Fan, Po-Yao Huang, Vaibhav Aggarwal, Arkabandhu Chowdhury, Omid Poursaeed, Judy Hoffman, Jitendra Malik, Yanghao Li, and Christoph Feichtenhofer.
\newblock Hiera: {A} {Hierarchical} {Vision} {Transformer} without the {Bells}-and-{Whistles}, June 2023.
\newblock URL \url{http://arxiv.org/abs/2306.00989}.
\newblock arXiv:2306.00989 [cs].

\bibitem[Shaw et~al.(2018)Shaw, Uszkoreit, and Vaswani]{shaw_self-attention_2018}
Peter Shaw, Jakob Uszkoreit, and Ashish Vaswani.
\newblock Self-{Attention} with {Relative} {Position} {Representations}, April 2018.
\newblock URL \url{http://arxiv.org/abs/1803.02155}.
\newblock arXiv:1803.02155 [cs].

\bibitem[Shifman and Weiss(2024)]{shifman_lost_2024}
Ofir Shifman and Yair Weiss.
\newblock Lost in {Translation}: {Modern} {Neural} {Networks} {Still} {Struggle} with {Small} {Realistic} {Image} {Transformations}.
\newblock In \emph{Computer {Vision} – {ECCV} 2024: 18th {European} {Conference}, {Milan}, {Italy}, {September} 29–{October} 4, 2024, {Proceedings}, {Part} {LXIX}}, pages 231--247, Berlin, Heidelberg, 2024. Springer-Verlag.
\newblock ISBN 978-3-031-72889-1.
\newblock \doi{10.1007/978-3-031-72890-7_14}.
\newblock URL \url{https://doi.org/10.1007/978-3-031-72890-7_14}.

\bibitem[Shih et~al.(2024)Shih, Peyvan, Zhang, and Karniadakis]{shih_transformers_2024}
Benjamin Shih, Ahmad Peyvan, Zhongqiang Zhang, and George~Em Karniadakis.
\newblock Transformers as {Neural} {Operators} for {Solutions} of {Differential} {Equations} with {Finite} {Regularity}, May 2024.
\newblock URL \url{http://arxiv.org/abs/2405.19166}.
\newblock arXiv:2405.19166 [cs].

\bibitem[Spanio and Rodà(2025)]{spanio_torchfx_2025}
Matteo Spanio and Antonio Rodà.
\newblock {TorchFX}: {A} modern approach to {Audio} {DSP} with {PyTorch} and {GPU} acceleration, April 2025.
\newblock URL \url{http://arxiv.org/abs/2504.08624}.
\newblock arXiv:2504.08624 [eess].

\bibitem[Sun et~al.(2023)Sun, Dong, Huang, Ma, Xia, Xue, Wang, and Wei]{sun_retentive_2023}
Yutao Sun, Li~Dong, Shaohan Huang, Shuming Ma, Yuqing Xia, Jilong Xue, Jianyong Wang, and Furu Wei.
\newblock Retentive {Network}: {A} {Successor} to {Transformer} for {Large} {Language} {Models}, August 2023.
\newblock URL \url{http://arxiv.org/abs/2307.08621}.
\newblock arXiv:2307.08621 [cs].

\bibitem[Tiwari et~al.(2023)Tiwari, Krishnan, and P]{tiwari_cono_2023}
Karn Tiwari, N.~M.~Anoop Krishnan, and Prathosh~A. P.
\newblock {CoNO}: {Complex} {Neural} {Operator} for {Continuous} {Dynamical} {Systems}, October 2023.
\newblock URL \url{http://arxiv.org/abs/2310.02094}.
\newblock arXiv:2310.02094 [nlin, physics:physics].

\bibitem[Touvron et~al.(2021)Touvron, Cord, Sablayrolles, Synnaeve, and Jégou]{touvron_going_2021}
Hugo Touvron, Matthieu Cord, Alexandre Sablayrolles, Gabriel Synnaeve, and Hervé Jégou.
\newblock Going deeper with {Image} {Transformers}, April 2021.
\newblock URL \url{http://arxiv.org/abs/2103.17239}.
\newblock arXiv:2103.17239 [cs] version: 2.

\bibitem[Vasconcelos et~al.(2020)Vasconcelos, Larochelle, Dumoulin, Roux, and Goroshin]{vasconcelos_effective_2020}
Cristina Vasconcelos, Hugo Larochelle, Vincent Dumoulin, Nicolas~Le Roux, and Ross Goroshin.
\newblock An {Effective} {Anti}-{Aliasing} {Approach} for {Residual} {Networks}, November 2020.
\newblock URL \url{http://arxiv.org/abs/2011.10675}.
\newblock arXiv:2011.10675 [cs].

\bibitem[Vaswani et~al.(2017)Vaswani, Shazeer, Parmar, Uszkoreit, Jones, Gomez, Kaiser, and Polosukhin]{vaswani_attention_2017}
Ashish Vaswani, Noam Shazeer, Niki Parmar, Jakob Uszkoreit, Llion Jones, Aidan~N Gomez, Ł~ukasz Kaiser, and Illia Polosukhin.
\newblock Attention is {All} you {Need}.
\newblock In \emph{Advances in {Neural} {Information} {Processing} {Systems}}, volume~30. Curran Associates, Inc., 2017.
\newblock URL \url{https://proceedings.neurips.cc/paper/2017/hash/3f5ee243547dee91fbd053c1c4a845aa-Abstract.html}.

\bibitem[Wang et~al.(2020)Wang, Li, Khabsa, Fang, and Ma]{wang_linformer_2020}
Sinong Wang, Belinda~Z. Li, Madian Khabsa, Han Fang, and Hao Ma.
\newblock Linformer: {Self}-{Attention} with {Linear} {Complexity}, June 2020.
\newblock URL \url{http://arxiv.org/abs/2006.04768}.
\newblock arXiv:2006.04768 [cs].

\bibitem[Wei(2022)]{wei_aliasing-free_2022}
Emmy~S. Wei.
\newblock Aliasing-{Free} {Convolutional} {Nonlinear} {Networks} {Using} {Implicitly} {Defined} {Functions}.
\newblock March 2022.
\newblock URL \url{https://hal.science/hal-03475613}.

\bibitem[Wu et~al.(2021{\natexlab{a}})Wu, Xiao, Codella, Liu, Dai, Yuan, and Zhang]{wu_cvt_2021}
Haiping Wu, Bin Xiao, Noel Codella, Mengchen Liu, Xiyang Dai, Lu~Yuan, and Lei Zhang.
\newblock {CvT}: {Introducing} {Convolutions} to {Vision} {Transformers}.
\newblock In \emph{Proceedings of the {IEEE}/{CVF} {International} {Conference} on {Computer} {Vision} ({ICCV})}, pages 22--31, October 2021{\natexlab{a}}.

\bibitem[Wu et~al.(2021{\natexlab{b}})Wu, Peng, Chen, Fu, and Chao]{wu_rethinking_2021}
Kan Wu, Houwen Peng, Minghao Chen, Jianlong Fu, and Hongyang Chao.
\newblock Rethinking and improving relative position encoding for vision transformer.
\newblock In \emph{Proceedings of the {IEEE}/{CVF} {International} {Conference} on {Computer} {Vision}}, pages 10033--10041, 2021{\natexlab{b}}.

\bibitem[Xu et~al.(2023)Xu, Yang, Liu, and He]{xu_e2-equivariant_2023}
Renjun Xu, Kaifan Yang, Ke~Liu, and Fengxiang He.
\newblock \${E}(2)\$-{Equivariant} {Vision} {Transformer}, July 2023.
\newblock URL \url{http://arxiv.org/abs/2306.06722}.
\newblock arXiv:2306.06722 [cs].

\bibitem[Yu et~al.(2025)Yu, Han, and Zhang]{yu_dmfft_2025}
Cuihong Yu, Cheng Han, and Chao Zhang.
\newblock {DMFFT}: improving the generation quality of diffusion models using fast {Fourier} transform.
\newblock \emph{Scientific Reports}, 15\penalty0 (1):\penalty0 10200, March 2025.
\newblock ISSN 2045-2322.
\newblock \doi{10.1038/s41598-025-94381-8}.
\newblock URL \url{https://www.nature.com/articles/s41598-025-94381-8}.
\newblock Publisher: Nature Publishing Group.

\bibitem[Yu et~al.(2022)Yu, Luo, Zhou, Si, Zhou, Wang, Feng, and Yan]{yu_metaformer_2022}
Weihao Yu, Mi~Luo, Pan Zhou, Chenyang Si, Yichen Zhou, Xinchao Wang, Jiashi Feng, and Shuicheng Yan.
\newblock {MetaFormer} {Is} {Actually} {What} {You} {Need} for {Vision}, July 2022.
\newblock URL \url{http://arxiv.org/abs/2111.11418}.
\newblock arXiv:2111.11418 [cs].

\bibitem[Yuan et~al.(2021)Yuan, Chen, Wang, Yu, Shi, Jiang, Tay, Feng, and Yan]{yuan_tokens--token_2021}
Li~Yuan, Yunpeng Chen, Tao Wang, Weihao Yu, Yujun Shi, Zi-Hang Jiang, Francis E.~H. Tay, Jiashi Feng, and Shuicheng Yan.
\newblock Tokens-to-{Token} {ViT}: {Training} {Vision} {Transformers} {From} {Scratch} on {ImageNet}.
\newblock pages 558--567, 2021.
\newblock URL \url{https://openaccess.thecvf.com/content/ICCV2021/html/Yuan_Tokens-to-Token_ViT_Training_Vision_Transformers_From_Scratch_on_ImageNet_ICCV_2021_paper.html?ref=https://githubhelp.com}.

\bibitem[Zhang(2019)]{zhang_making_2019}
Richard Zhang.
\newblock Making convolutional networks shift-invariant again.
\newblock \emph{36th International Conference on Machine Learning, ICML 2019}, 2019-June:\penalty0 12712--12722, 2019.
\newblock URL \url{https://richzhang.github.io/antialiased-cnns/.}
\newblock arXiv: 1904.11486 ISBN: 9781510886988.

\bibitem[Zhang et~al.(2025)Zhang, Xu, Luo, and Wang]{zhang_depth-wise_2025}
Tianxiao Zhang, Wenju Xu, Bo~Luo, and Guanghui Wang.
\newblock Depth-{Wise} {Convolutions} in {Vision} {Transformers} for {Efficient} {Training} on {Small} {Datasets}.
\newblock \emph{Neurocomputing}, 617:\penalty0 128998, February 2025.
\newblock ISSN 09252312.
\newblock \doi{10.1016/j.neucom.2024.128998}.
\newblock URL \url{http://arxiv.org/abs/2407.19394}.
\newblock arXiv:2407.19394 [cs].

\bibitem[Zheng et~al.(2024)Zheng, Li, Xu, Zhu, Lin, and Zhang]{zheng_alias-free_2024}
Jianwei Zheng, Wei Li, Ni~Xu, Junwei Zhu, Xiaoxu Lin, and Xiaoqin Zhang.
\newblock Alias-{Free} {Mamba} {Neural} {Operator}.
\newblock \emph{Advances in Neural Information Processing Systems}, 37:\penalty0 52962--52995, December 2024.
\newblock URL \url{https://proceedings.neurips.cc/paper_files/paper/2024/hash/5ee553ec47c31e46a1209bb858b30aa5-Abstract-Conference.html}.

\bibitem[Zhou et~al.(2025)Zhou, Xiao, Yang, and Pan]{zhou_alias-free_2025}
Yifan Zhou, Zeqi Xiao, Shuai Yang, and Xingang Pan.
\newblock Alias-{Free} {Latent} {Diffusion} {Models}: {Improving} {Fractional} {Shift} {Equivariance} of {Diffusion} {Latent} {Space}.
\newblock \emph{CVPR}, 2025.

\bibitem[Zou et~al.(2020)Zou, Xiao, Yu, and Lee]{zou_delving_2020}
Xueyan Zou, Fanyi Xiao, Zhiding Yu, and Yong~Jae Lee.
\newblock Delving {Deeper} into {Anti}-aliasing in {ConvNets}.
\newblock \emph{International Journal of Computer Vision 2022}, pages 1--15, August 2020.
\newblock ISSN 15731405.
\newblock \doi{10.1007/S11263-022-01672-Y/FIGURES/11}.
\newblock URL \url{http://arxiv.org/abs/2008.09604}.
\newblock arXiv: 2008.09604 Publisher: Springer.

\end{thebibliography}

\newpage
\appendix
\section{Full proofs} \label{sec:appendix-proofs}
% {proposition}
%     $K^{\top}V$ is shift-invariant.
% \end{proposition}
% \begin{proof}
% \emph{proof.}
\paragraph{Proposition 2.} $f \left( K^{\top}V \right)$ is shift-invariant.
\begin{proof}
The problem can be simplified by considering an arbitrary entry in $K^{\top}V$, since
\begin{equation*}
\left(K^{\top}V\right)_{i,j}=K_{i}^{\top}V_{j} \,,    
\end{equation*}
where $K_{i},\,V_{j}$ are the $i$-th and $j$-th columns of $K$ and $V$, representing 1-dimensional signals.
% and the claim can be written to an arbitrary component w.l.o.g.

By Parseval theorem, this inner product between signals equals to their inner product in Fourier domain:
\begin{equation}
K_{i}^{\top}V_{j}=\frac{1}{2\pi}\int_{-\pi}^{\pi}\hat{K_{i}}\left(\omega\right)\hat{V_{j}}^{*}\left(\omega\right)d\omega \, .
\end{equation}

Denote $K^{\tau}$ and $V^{\tau}$ the queries and values of a $\tau$-shifted input signal w.r.t.~the input signal of $K$ and $V$.
Following \appCref{prop:qkv-equiv}, a translation by $\tau$ of the input $x$ yields a translation by $\tau$ of both $K$ and $V$.
The Fourier transform of the translated signals differs by a phase which cancels out in the inner product, thus we get the same product:
\begin{align}
K_{i}^{\tau\top}V_{j}^{\tau}	&= \frac{1}{2\pi}\int_{-\pi}^{\pi}\hat{K_{i}^{\tau}}\left(\omega\right)\hat{V_{j}^{\tau}}^{*}\left(\omega\right)d\omega \\
	&= \frac{1}{2\pi}\int_{-\pi}^{\pi}\hat{K_{i}}\left(\omega\right)e^{j\omega\tau}\left(\hat{V_{j}}\left(\omega\right)e^{j\omega\tau}\right)^{*}d\omega \\
	&= \frac{1}{2\pi}\int_{-\pi}^{\pi}\hat{K_{i}}\left(\omega\right)\hat{V_{j}}^{*}\left(\omega\right)d\omega \\
	&= K_{i}^{\top}V_{j}\:.
\end{align}
\end{proof}

\paragraph{Class Attention.}
Below we formalize the statement regarding shift-invariance of the Class Attention layer in \appCref{sec:method-afc}.

\begin{proposition}\label{prop:class-attn}
Let \(Q = X W_{q}\), \(K = X W_{k}\) and \(V = X W_{v}\) be the query, key and value matrices of a patch sequence \(X \in \mathbb{R}^{N \times D}\).  
Append the sequence with a class token,
\begin{equation}\label{eq:tilde}
  \tilde{Q} \;=\; 
  \left[
    Q^{\top},\,
    q_{\mathrm{cls}}
  \right]^{\top},
  \qquad
  \tilde{K} \;=\; 
  \left[
    K^{\top},\,
    k_{\mathrm{cls}}
  \right]^{\top},
  \qquad
  \tilde{V} \;=\; 
  \left[
    V^{\top},\,
    v_{\mathrm{cls}}
  \right]^{\top},
\end{equation}
with learnable vectors \(q_{\mathrm{cls}}, k_{\mathrm{cls}}, v_{\mathrm{cls}} \in \mathbb{R}^{D}\).  
Our CA layer applies the SEA update
\begin{equation}\label{eq:ca-update}
  \tilde{V}'
  \;=\;
  \mathrm{SEA}\!\left(
    \tilde{Q},\,
    \tilde{K},\,
    \tilde{V}
  \right)
  \;=\;
  \tilde{Q}\,
  f\!\left(
    \tilde{K}^{\top}\tilde{V}
  \right)
  \;\in\;
  \mathbb{R}^{\left( N+1 \right)\times D},
\end{equation}
where 
\(f \colon \mathbb{R}^{\left( D \times D \right)}
\!\to\!
\mathbb{R}^{D \times D}\) 
is any matrix function.  
Denote the output by
\(
  \tilde{V}'
  =
  \left[
    V^{\prime\,\top},\,
    v_{\mathrm{cls}}^{\prime}
  \right]^{\top}
\)
with \(V' \in \mathbb{R}^{N\times D}\).
Then
\begin{enumerate}
  \item \textbf{Patch equivariance:}  
        \(V^{\prime}\) is shift-equivariant.
  \item \textbf{Class invariance:}  
        \(v_{\mathrm{cls}}^{\prime}\) is shift-invariant.
\end{enumerate}
\end{proposition}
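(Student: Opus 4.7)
The plan is to reduce the class-attention case to the patch-only case already covered by Propositions 1--3, by decomposing $\tilde K^{\top}\tilde V$ into a shift-invariant patch part plus an input-independent correction.

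First I would unpack $\tilde K^{\top}\tilde V$ using the block form in \Cref{eq:tilde}. Writing out the sum over the $N+1$ rows gives
\begin{equation*}
\tilde K^{\top}\tilde V \;=\; K^{\top}V \;+\; k_{\mathrm{cls}}\,v_{\mathrm{cls}}^{\top}.
\end{equation*}
The first summand is shift-invariant by \Cref{prop:kv-inv}, and the second is a fixed rank-one matrix built from learnable vectors that do not depend on the input signal, so it is trivially shift-invariant. Therefore $\tilde K^{\top}\tilde V$, and hence $M := f(\tilde K^{\top}\tilde V) \in \mathbb{R}^{D\times D}$, is shift-invariant for any $f$.

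Next I would split the output $\tilde V' = \tilde Q M$ along its appended row. The top $N$ rows form $V' = QM$, and column $j$ equals $\sum_{i=1}^{D} Q_i\, M_{ij}$, i.e.~a linear combination of the shift-equivariant columns of $Q$ (\Cref{prop:qkv-equiv}) with shift-invariant scalar coefficients; the argument of \Cref{prop:vprime-equiv} then gives patch equivariance. The last row of $\tilde V'$ is $q_{\mathrm{cls}}^{\top} M$, so $v_{\mathrm{cls}}' = M^{\top} q_{\mathrm{cls}}$ is a linear combination of shift-invariant entries of $M$ with input-independent weights $q_{\mathrm{cls}}$, which is shift-invariant, giving the class-invariance claim.

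The only step that requires any care is the block decomposition of $\tilde K^{\top}\tilde V$: one must confirm that appending a constant row to both $K$ and $V$ adds only a fixed rank-one outer product, with no cross terms that could reintroduce a shift dependence. Once that identity is written down, both conclusions fall out of Propositions 1--3 without further computation, so I do not anticipate a serious obstacle beyond this bookkeeping.
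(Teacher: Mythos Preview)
Your proposal is correct and follows essentially the same route as the paper's own proof: the paper likewise expands $\tilde K^{\top}\tilde V = K^{\top}V + k_{\mathrm{cls}}v_{\mathrm{cls}}^{\top}$, invokes \Cref{prop:kv-inv} for the first summand and constancy of the rank-one term for the second, then splits $\tilde V' = \tilde Q\,f(\tilde K^{\top}\tilde V)$ into its patch and class rows and appeals to \Cref{prop:qkv-equiv,prop:vprime-equiv} exactly as you do. Your remark about the absence of cross terms is the one piece of bookkeeping the paper also implicitly relies on, so nothing further is needed.
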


\begin{proof}
Let \(Q^{\tau}\) \(K^{\tau}\) and \(V^{\tau}\) be the keys and values obtained from the \(\tau\)-translated input \(X\), and define  
\begin{equation}\label{eq:tau-tilde}
  \tilde{Q}^{\tau} \;=\; 
  \left[
    \left( Q^\tau \right)^{\top},\,
    q_{\mathrm{cls}}
  \right]^{\top},
  \quad
  \tilde{K}^{\tau} \;=\; 
  \left[
    \left( K^\tau \right)^{\top},\,
    k_{\mathrm{cls}}
  \right]^{\top},
  \quad
  \tilde{V}^{\tau} \;=\; 
  \left[
    \left(  V^\tau \right)^{\top},\,
    v_{\mathrm{cls}}
  \right]^{\top}.
\end{equation}

% Here \(S_{\Delta}\in\mathbb{R}^{N\times N}\) circularly permutes patch rows by \(\Delta\); it satisfies  
% \(
%   S_{\Delta} S_{\Delta^{\prime}}
%   =
%   S_{\Delta + \Delta^{\prime}}
% \)
% and \(S_{0} = I_{N}\).

\noindent\textbf{Shift-invariance of the attention weights.}  
Since the translation \(\tau\) acts only on patch tokens, we get
\begin{equation}\label{eq:M-inv}
  \tilde{K}^{\tau\,\top}\tilde{V}^{\tau}
  \;=\;
  \left( K^\tau
  \right)^{\!\top}
   V^\tau
  +
  k_{\mathrm{cls}} v_{\mathrm{cls}}^{\top}
  \;=\;
  K^{\top} V
  +
  k_{\mathrm{cls}} v_{\mathrm{cls}}^{\top}
  \;=\;
  \tilde{K}^{\top}\tilde{V},
\end{equation}
where the middle equality uses \appCref{prop:kv-inv}.
The rank-one class term $k_{\mathrm{cls}}v_{\mathrm{cls}}^{\top}$ is constant (independent of the input translation), hence $f(\tilde K^{\top}\tilde V)$ is shift-invariant.
% Hence \(f\left( \tilde{K}^{\top}\tilde{V} \right)\) is shift-invariant.

It holds that
\begin{equation} \label{eq:tilde-vprime}
  \tilde{V}' =
  \tilde{Q} f\left( \tilde{K}^{\top}\tilde{V} \right)
  =
  \left[
     Q^{\top} f\left( \tilde{K}^{\top}\tilde{V} \right) ,\,
    q_{\mathrm{cls}}^\top f\left( \tilde{K}^{\top}\tilde{V} \right)
  \right]^{\top}
\end{equation}

\noindent\textbf{Patch equivariance.}  
From \Cref{eq:tilde-vprime}, the patch tokens post CA are \( V' = Q^{\top} f\left( \tilde{K}^{\top}\tilde{V} \right) \), where  \(f\left( \tilde{K}^{\top}\tilde{V} \right) \) is shift-invariant, therefore \(V'\) is shift-equivariant similar to \appCref{prop:vprime-equiv}.

\noindent\textbf{Class invariance.}  
From \Cref{eq:tilde-vprime}, the class token post CA is  \( q_{\mathrm{cls}}^\top f\left( \tilde{K}^{\top}\tilde{V} \right) \), which is shift-invariant.
\end{proof}

\section{Additional results} \label{sec:app-additional-results}

\subsection{Additional datasets}
We evaluate all models from \cref{sec:exp-acc-consist} on three additional classification benchmarks --- CIFAR-10, CIFAR-100 \citep{krizhevsky_learning_2009}, and Stanford Cars \citep{krause_3d_2013} --- under two protocols: (i) fine-tuning ImageNet-pretrained checkpoints and (ii) training from scratch.
In both protocols, we train each model using the ImageNet recipe of \Cref{tab:hyperparams} with $1{,}000$ epochs, where in the fine-tuning protocol, we initialize the model weights using the checkpoints from \cref{sec:exp-acc-consist}. 
We report top-1 accuracy together with cyclic shift consistency for integer and half-pixel translations, defined exactly as in \Cref{sec:exp-acc-consist}. 

Across all datasets, AFT maintains near-perfect shift-equivariance (above 99\% consistency in integer and half-pixel shifts). 
Additionally, when fine-tuned, the XCiT-Small-AF model is slightly but consistently more accurate than the baseline on all three datasets, suggesting that the shift-invariance prior can benefit larger transformers when adapting to small datasets.

% Fine-tuned
\begin{table}[ht]
\caption{\textbf{CIFAR and Stanford Cars (fine-tuning): accuracy and cyclic shift consistency.} We fine-tune ImageNet-pretrained checkpoints on CIFAR-10/100 and Stanford Cars. Metrics are top-1 test accuracy and consistency to integer and half-pixel cyclic translations (as in \Cref{sec:exp-acc-consist}).}

\label{tab:finetune-cifar-cars}
\small
\centering
\resizebox{\textwidth}{!}{%
\begin{tabular}{l ccc ccc ccc}
\toprule
& \multicolumn{3}{c}{\textbf{CIFAR-10}} & \multicolumn{3}{c}{\textbf{CIFAR-100}} & \multicolumn{3}{c}{\textbf{Stanford Cars}}\\
\cmidrule(lr){2-4}\cmidrule(lr){5-7}\cmidrule(lr){8-10}
\textbf{Model} &
\doublelinecell{Test\\accuracy} &
\doublelinecell{Integer\\shift consist.} &
\doublelinecell{Half-pixel\\shift consist.} &
\doublelinecell{Test\\accuracy} &
\doublelinecell{Integer\\shift consist.} &
\doublelinecell{Half-pixel\\shift consist.} &
\doublelinecell{Test\\accuracy} &
\doublelinecell{Integer\\shift consist.} &
\doublelinecell{Half-pixel\\shift consist.} \\
\midrule
XCiT-Nano (Baseline)         & \textbf{98.0} & 98.0 & 98.1 & \textbf{84.3} & 89.4 & 89.9 & \textbf{92.3} & 95.2 & 95.1 \\
XCiT-Nano-APS                & 97.7 & \textbf{100.0} & 99.4 & 84.1 & \textbf{100.0} & 96.9 & 92.2 & \textbf{100.0} & 97.0 \\
XCiT-Nano-AF (ours)  & 97.7 & 99.9 & \textbf{99.9} & \textbf{84.3} & 99.6 & \textbf{99.4} & 92.1 & 99.9 & \textbf{99.8} \\
\midrule
XCiT-Small (Baseline)        & 98.2   & 98.4    & 98.5    & 85.6 & 87.4 & 89.6 & 92.6 & 95.4 & 96.3 \\
XCiT-Small-APS               & 98.3   & \textbf{100.0}    & 99.5    & 85.4 & \textbf{100.0} & 96.2 & 92.2 & \textbf{100.0} & 97.2 \\
XCiT-Small-AF (ours) & \textbf{98.4} & 99.9 & \textbf{99.9} & \textbf{85.8}    & 99.6    & \textbf{99.5}    & \textbf{93.0} & 99.9 & \textbf{99.9} \\
\bottomrule
\end{tabular}%
}
\end{table}

% Trained from scratch
\begin{table}[ht]
\caption{\textbf{CIFAR and Stanford-Cars (from scratch): accuracy and cyclic shift consistency.} Models are trained from scratch on each dataset using the ImageNet training setup with $1{,}000$ epochs; metrics as in \Cref{sec:exp-acc-consist}.}
\label{tab:scratch-cifar-cars}
\small
\centering
\resizebox{\textwidth}{!}{%
\begin{tabular}{l ccc ccc ccc}
\toprule
& \multicolumn{3}{c}{\textbf{CIFAR-10}} & \multicolumn{3}{c}{\textbf{CIFAR-100}} & \multicolumn{3}{c}{\textbf{Stanford Cars}}\\
\cmidrule(lr){2-4}\cmidrule(lr){5-7}\cmidrule(lr){8-10}
\textbf{Model} &
\doublelinecell{Test\\accuracy} &
\doublelinecell{Integer\\shift consist.} &
\doublelinecell{Half-pixel\\shift consist.} &
\doublelinecell{Test\\accuracy} &
\doublelinecell{Integer\\shift consist.} &
\doublelinecell{Half-pixel\\shift consist.} &
\doublelinecell{Test\\accuracy} &
\doublelinecell{Integer\\shift consist.} &
\doublelinecell{Half-pixel\\shift consist.} \\
\midrule
XCiT-Nano (Baseline)         & \textbf{97.2} & 98.0 & 98.0 & \textbf{82.4} & 90.3 & 90.2 & \textbf{86.5} & 91.6 & 91.7 \\
XCiT-Nano-APS                & 97.2 & \textbf{100.0} & 99.2 & 82.3 & \textbf{100.0} & 97.1 & 84.1 & \textbf{100.0} & 93.3 \\
XCiT-Nano-AF (ours) & 96.5 & 99.9 & \textbf{99.9} & 81.3 & 99.5 & \textbf{99.4} & 85.3 & 99.7 & \textbf{99.6} \\
\midrule
XCiT-Small (Baseline)        & \textbf{98.3} & 98.7 & 98.7 & \textbf{85.3} & 91.3 & 91.3 & 89.6 & 95.1 & 94.5 \\
XCiT-Small-APS               & 98.0 & \textbf{100.0} & 99.4 & 85.1 & \textbf{100.0} & 95.7 & \textbf{90.5} & \textbf{100.0} & 96.5 \\
XCiT-Small-AF (ours)& 97.6 & 99.9 & \textbf{99.9} & 83.4 &	99.6 &	\textbf{99.6}   & 88.5 & 99.8 & \textbf{99.8} \\
\bottomrule
\end{tabular}%
}
\end{table}

\subsection{Global average pooling vs AF Class Attention}
In \cref{sec:aft_methods} we propose two mechanisms to get a shift-invariant global representation out of the AFT ---
a global average pooling over the embedding dimension and an alias-free class attention that leverages SEA to maintain a shift-invariant class token. 
We compare the final \emph{AF class attention} (AFCA) head with a \emph{global average pooling} (AvgPool) head within the AFT architecture. 
As shown in \Cref{tab:afca-vs-avgpool}, both AFCA and AvgPool maintain near-perfect shift consistency.
AFCA demonstrates consistent improvement over AvgPool in top-1 accuracy, most visible in the Small variant.

\begin{table}[t]
\caption{\textbf{AF class attention vs.\ global average pooling (AFT).}
  Top‑1 accuracy and cyclic shift consistency (integer and half‑pixel) on ImageNet.
  AvgPool and AFCA retain near‑perfect equivariance, while AFCA provides a consistent accuracy gain, most notably for the Small variant.}
  \label{tab:afca-vs-avgpool}
  
\centering
% \resizebox{1.0\linewidth}{!}{
\begin{tabular}{lccc}
\toprule
Model                   & \doublelinecell{Test\\accuracy} & \doublelinecell{Integer\\shift consist.} &
\doublelinecell{Half-pixel\\shift consist.}\\
\midrule
XCiT-Nano-AF (AvgPool)   & 70.35 & 99.0 & 98.6          \\
XCiT-Nano-AF (AF-CA)    & 70.48  & 99.2 & 99.4        \\
\midrule
XCiT-Small-AF (AvgPool)   & 80.70 & 99.5 & 99.4          \\
XCiT-Small-AF (AF-CA)    & 81.81  & 99.5 & 99.4        \\
\bottomrule
\end{tabular}
% }
\end{table}

\subsection{Polynomial vs GELU comparison}
Similar to the Alias-Free ConvNet (AFC) of \citet{michaeli_alias-free_2023}, certified shift-invariance in the 
 Alias-Free Transformer (AFT) can be achieved by replacing the filtered GELU activations with polynomial approximations.
We therefore train an alias-free XCiT-Nano variant whose activations are polynomials with learnable coefficients 
 per embedding channel, following \citet{michaeli_alias-free_2023}.
We use polynomials of degree 2 in the AFT blocks and degree 3 in the patch-embedding stage (PE), which remains alias-free thanks to the downsampling layers following the activations in the PE. 
The results in \Cref{tab:aft-poly} show that the full polynomial model (``Poly'') has near-100\% shift consistency, with a small gap that can be attributed to numerical errors, similar to the case in the APS model (see \appCref{tab:imagenet-accuracy}). 
However, we observe that unlike in the AFC, polynomial activations lead to a significant reduction in accuracy.

Interestingly, when the four GELU activations in the PE are retained and only the block activations are replaced (``GELU (PE), Poly (Blocks)''), most of the lost accuracy is recovered. 
This may indicate that polynomial activations limit the representational capacity of the convolutional PE which is 
much shallower than the convnet tested in AFC.

\begin{table}[ht]
\caption{\textbf{Effect of polynomial activations on ImageNet performance and shift consistency.}  Top-1 accuracy and consistency (\%) of XCiT-Nano with the standard filtered GELU, full polynomial replacement (Poly), and a hybrid that keeps GELU in the patch-embedding (PE)}
  \label{tab:aft-poly}
\small
\centering
% \resizebox{1.0\linewidth}{!}{
\begin{tabular}{lccc}
\toprule
Model                   & \doublelinecell{Test\\accuracy} & \doublelinecell{Integer\\shift consist.} & \doublelinecell{Half-pixel \\shift consist.}  \\
\midrule
GELU    & 70.4          & 98.8          & 98.4         \\
Poly    & 65.8          & 99.7          &  99.6        \\
\midrule
GELU (PE), Poly (Blocks)    & 68.5          & 99.4          & 98.7          \\
\bottomrule
\end{tabular}
% }
\end{table}

\section{Translation visualization} \label{sec:translation-visualization}
We provide visual examples of the translations described in the paper in \Cref{fig:cyclic-shifts-vis,fig:realistic-shifts-vis}.

\begin{figure}[ht] % or \begin{figure*}[t] for full width
  \centering
  % --- first sub‑figure ---
  \begin{subfigure}[b]{0.32\linewidth}
    \centering
    \includegraphics[width=\linewidth]{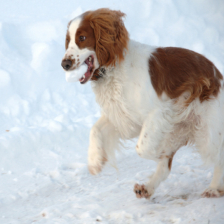}
    \caption{Original image}
    \label{fig:cyc-shift-original-img}
  \end{subfigure}
  \hfill
  % --- second sub‑figure ---
  \begin{subfigure}[b]{0.32\linewidth}
    \centering
    \includegraphics[width=\linewidth]{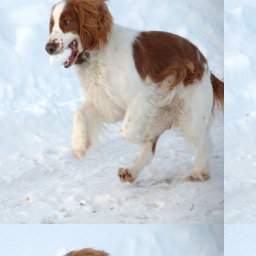}
    \caption{Circular integer shift}
    \label{fig:cyc-shift-integer}
  \end{subfigure}
  \hfill
  \begin{subfigure}[b]{0.32\linewidth}
    \centering
    \includegraphics[width=\linewidth]{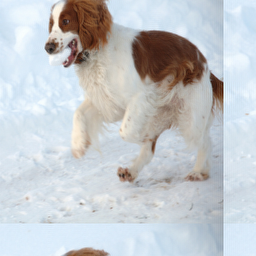}
    \caption{Circular half-pixel shift}
    \label{fig:cyc-shift-half-pixel}
  \end{subfigure}
  \caption{
  \textbf{Visualization of cyclic shifts.} 
  (a) Original ImageNet \cite{deng_imagenet_2009} validation-set image. (b) Circular shift of 16 pixels in horizontal and vertical axes. (c) Circular shift of 16.5 pixels in horizontal and vertical axes. The original image is upsampled by a factor 2, circularly shifted by 33 pixels, and downsampled by factor 2.
  }
  \label{fig:cyclic-shifts-vis}
\end{figure}

\begin{figure}[ht] % or \begin{figure*}[t] for full width
  \centering
  % --- first sub‑figure ---
  \begin{subfigure}[b]{0.32\linewidth}
    \centering
    \includegraphics[width=\linewidth]{figures_camera_ready/translations/original.png}
    \caption{Original image}
    \label{fig:realistic-shift-original-img}
  \end{subfigure}
  \hfill
  % --- second sub‑figure ---
  \begin{subfigure}[b]{0.32\linewidth}
    \centering
    \includegraphics[width=\linewidth]{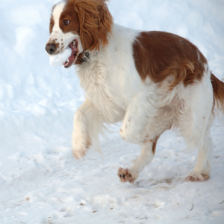}
    \caption{Crop-shift}
    \label{fig:crop-shift}
  \end{subfigure}
  \hfill
  \begin{subfigure}[b]{0.32\linewidth}
    \centering
    \includegraphics[width=\linewidth]{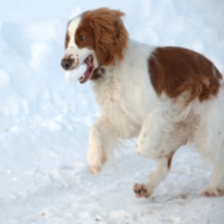}
    \caption{Bilinear fractional shift}
    \label{fig:bilinear-shift}
  \end{subfigure}
  \caption{
  \textbf{Visualization of realistic shifts.}
  (a) Original ImageNet \cite{deng_imagenet_2009} validation-set image --- $224 \times 224$ center crop of the original $256 \times 256$ image.
  (b) Crop-shift of the original image of 16 pixels in the horizontal and vertical axes. The cropped area is shifted by 16 pixels with respect to the cropped area in the original image.
  (c) Bilinear fractional shift of 0.5 pixels in horizontal and vertical axes. We use a $226 \times 226$ center crop of the original $256 \times 256$ image and simulate a fractional-pixel shift using a grid-sample with a fractional offset.
  }
  \label{fig:realistic-shifts-vis}
\end{figure}

\section{Experiments details}
As mentioned in \appCref{sec:experiments}, we used the same training settings as in XCiT \citep{el-nouby_xcit_2021}, except for lowering the batch size for the AF model.
We report the used training hyperparameters in \Cref{tab:hyperparams}.

\begin{table}[ht]
  \centering
  \caption{\textbf{Hyperparameters.} Unless stated otherwise, the same settings apply to all models.}
  \label{tab:hyperparams}
  \small
  \begin{tabular}{@{}llc@{}}
    \toprule
    \textbf{Category} & \textbf{Parameter} & \textbf{Value} \\ \midrule
    \multirow{3}{*}{Optimizer} 
      & Optimizer & AdamW \\
      & $(\beta_1,\beta_2)$ & $(0.9,\,0.999)$ \\
      & Weight decay & $0.05$ \\  \midrule
    \multirow{4}{*}{Learning rate scheduling} 
      & Base LR  & $1\times10^{-3}$ (Baseline),  $5\times10^{-4}$ (AF, APS) \\
      & Warm-up epochs & $5$ \\
      & LR decay & Cosine \\
      & Min LR & $1\times10^{-5}$ \\ \midrule
    \multirow{2}{*}{Data} 
      & Resolution & $224 \times 224 $ \\
      & Batch size & $1024$ (Baseline), $512$ (AF, APS) \\ \midrule
    \multirow{2}{*}{Regularization} 
      & Layer scale ($\epsilon$ init) & $1.0$ \\
      & Stochastic depth & $0.0$ (Nano), $0.05$ (Small) \\
      
      \bottomrule
    
  \end{tabular}
\end{table}

\subsection{Batch size choice} \label{sec:batch-size-choice}
Aiming to avoid expensive hyperparameter tuning, we used the XCiT original recipe \cite{el-nouby_xcit_2021}.
However, in our early experimentations, we observed fluctuations in the training curves of the AF and APS models, which were alleviated by reducing the batch size.
To decide fair batch sizes, we trained the \emph{Nano} variants of Baseline, APS, and AF with batch sizes 512 and 1024 and picked, for each method, the configuration that performed best. 
The Baseline favored 1024 (top‑1 $70.4$ vs.\ $70.1$ at 512), while APS and AF favored 512 (APS: $68.7$ vs.\ $67.2$ at 1024; AF: $70.4$ vs.\ $70.1$ at 1024).
We therefore used batch size $1024$ for the Baseline and $512$ for APS and AF throughout the paper. 
Note that the figures above for the AF and APS models are with an AvgPool head. 
We applied the same choice to the Nano and Small variants with CA and AFCA heads without further tuning.

%%%%%%%%%%%%%%%%%%%%%%%%%%%%%%%%%%%%%%%%%%%%%%%%%%%%%%%%%%%%
% Arxiv - No checklist
% \clearpage
% \input{checklist}

% \newpage
% \appendix
% \input{appendix}

% \bibliographystyle{plainnat}
% \bibliography{references}

\end{document}